\documentclass{article} 
\usepackage{iclr2026_delta,times}
\usepackage{amsmath,amssymb,mathtools}
\usepackage{algorithm,algpseudocode}
\usepackage[dvipsnames]{xcolor}
\usepackage{amsmath, amssymb}
\usepackage{algpseudocode}
\usepackage{algorithm}
\usepackage{amsthm}
\usepackage{wrapfig}
\usepackage[hidelinks]{hyperref}

\usepackage{subcaption}
\usepackage{tabularx}
\usepackage{comment}
\usepackage{booktabs,multirow}
\usepackage{makecell}

\PassOptionsToPackage{final}{graphicx}
\usepackage{graphicx}

\usepackage{amsmath,amsfonts,bm}

\def\eqref#1{equation~\ref{#1}}

\def\1{\bm{1}}

\DeclareMathAlphabet{\mathsfit}{\encodingdefault}{\sfdefault}{m}{sl}
\SetMathAlphabet{\mathsfit}{bold}{\encodingdefault}{\sfdefault}{bx}{n}

\newcommand{\Ls}{\mathcal{L}}
\newcommand{\R}{\mathbb{R}}

\newcommand{\sigmoid}{\sigma}

\DeclareMathOperator*{\argmin}{arg\,min}

\DeclareMathOperator{\Tr}{Tr}

\newtheorem{proposition}{Proposition}

\newcommand{\bt}{{\bm{\theta}}}
\newcommand{\gen}{ \bm{G}_{\bt} }
\newcommand{\bhat}[1]{\hat{\bm{#1}}}  
\newcommand{\bigO}{\mathcal{O}}

\usepackage{hyperref}
\usepackage{url}

\newcommand{\SQL}{\textsc{neuroSQL} }

\title{Generative Model via Quantile Assignment}

\author{Georgi Hrusanov$^{1,2}$\thanks{Corresponding author.},\hspace{1mm} 
Oliver Y. Ch\'en$^{1,2}$,\hspace{1mm} 
and Julien S. Bodelet$^{1,2,3}$%
\\[1mm]
$^{1}$University of Lausanne, Lausanne, Switzerland\\[0.5mm]
$^{2}$Lausanne University Hospital (CHUV), Lausanne, Switzerland\\[0.5mm]
$^{3}$Department of Statistics, Stanford University, Stanford, USA\\[1mm]
\texttt{georgi.hrusanov@chuv.ch}\\
\texttt{olivery.chen@chuv.ch}\\
\texttt{julien.bodelet@chuv.ch}
}

\iclrfinalcopy 

\begin{document}

\maketitle

\begin{abstract}
  Deep Generative models (DGMs) play two key roles in modern machine learning: (i) producing new information (e.g., image synthesis) and (ii) reducing dimensionality. 
  However, traditional architectures often rely on auxiliary networks such as encoders in Variational Autoencoders (VAEs) or discriminators in Generative Adversarial Networks (GANs), which introduce training instability, computational overhead, and risks like mode collapse.
In this work, we present NeuroSQL, a new generative paradigm that eliminates the need for auxiliary networks by learning low-dimensional latent representations implicitly.
NeuroSQL leverages an asymptotic approximation that expresses the latent variables as the solution to an optimal transportation problem.
Specifically, NeuroSQL learns the latent variables by solving a linear assignment problem and then passes the latent information to a standalone generator. 
  To demonstrate NeuroSQL's efficacy, we benchmark its performance against GANs, VAEs, and a budget-matched diffusion baseline on four independent datasets on handwritten digits (MNIST), faces from the CelebFaces Attributes Dataset (CelebA), animal faces from Animal Faces HQ (AFHQ), and brain images from the Open Access Series of Imaging Studies (OASIS). Compared to VAEs, GANs, and diffusion models: (1) in terms of image quality, NeuroSQL achieves overall lower mean pixel distance between synthetic and authentic images and stronger perceptual/structural fidelity, under the same computational setting; (2) computationally, NeuroSQL requires the least amount of training time; and (3) practically, NeuroSQL provides an effective solution for generating synthetic data when there are limited training data (e.g., data with a higher-dimensional feature space than the sample size). Taken together, by embracing quantile assignment rather than an encoder, NeuroSQL provides a fast, stable, and robust way to generate synthetic data with minimal information loss.

\end{abstract}

\section{Introduction}

Deep generative models (DGMs) have become a cornerstone of machine learning and have made meaningful contributions to image synthesis, data augmentation, and creative content generation. Over the past decade, they have also become well established in diverse scientific fields, such as genomics and neuroimaging, for complex data analysis tasks, including data interpretation, image/sequence decoding, and the generation of realistic datasets. A large share of these advances has been powered by variational autoencoders (VAEs; \citealp{Kingma2014}) and generative adversarial networks (GANs; \citealp{goodfellow2014generative}), which remain the two dominant paradigms for generative modeling from lower-dimensional latent spaces. Both frameworks adopt a common strategy: pairing a generator with a complementary deep neural network (DNN). In VAEs, an encoder maps observations to latent variables, whereas in GANs, a discriminator provides adversarial feedback to train the generator.

Despite their promise, training DGMs remains challenging, with practical and conceptual limitations. Practically, optimizing multiple networks simultaneously can be unstable and may lead to failures such as mode collapse. Training auxiliary networks that operate directly on high-dimensional observations is often more difficult and data-intensive than training the generator itself, which typically takes low-dimensional latent inputs. GANs, in particular, are prone to convergence failures such as mode collapse \citep{mescheder2018training}. VAEs often produce blurred reconstructions due to variational approximations and pixel-wise reconstruction losses such as mean squared error (MSE). More broadly, joint training of multiple deep networks increases sample complexity, computational cost, and instability, and these issues are exacerbated when data are high-dimensional relative to sample size.

Theoretically, there is no guarantee that the required encoder or discriminator, which works as a continuous mapping that a DNN can approximate, exists in the first place. While the generator depends on a latent variable that can have a much lower dimension than the observations, both the encoder and discriminator are functions of the data itself and may therefore encounter a curse of dimensionality \citep{stone1985additive}. Although DNNs are believed to achieve fast convergence rates \citep{schmidt2020nonparametric} or, under favorable conditions, mitigate the curse of dimensionality (see e.g. \citealp{suzuki2019adaptivity}), these properties rely on strict assumptions \citep{Lederer2025} that are not typically satisfied by DGMs. These limitations are reflected in theoretical work on VAEs and GANs, which often sidestep the resulting difficulties \citep{bodelet2025statistical,chae2023likelihood,biau2020some}.

More recently, Denoising Diffusion Probabilistic Models (DDPMs) have set new benchmarks for image quality through iterative sampling. Yet diffusion models remain computationally intensive and require significant training time. In resource-constrained or low-data regimes, they often suffer from high-variance score estimates and diluted gradients.

\begin{wrapfigure}{r}{0.52\textwidth}
  \vspace{-0.5\baselineskip} 
  \centering
  \includegraphics[width=\linewidth]{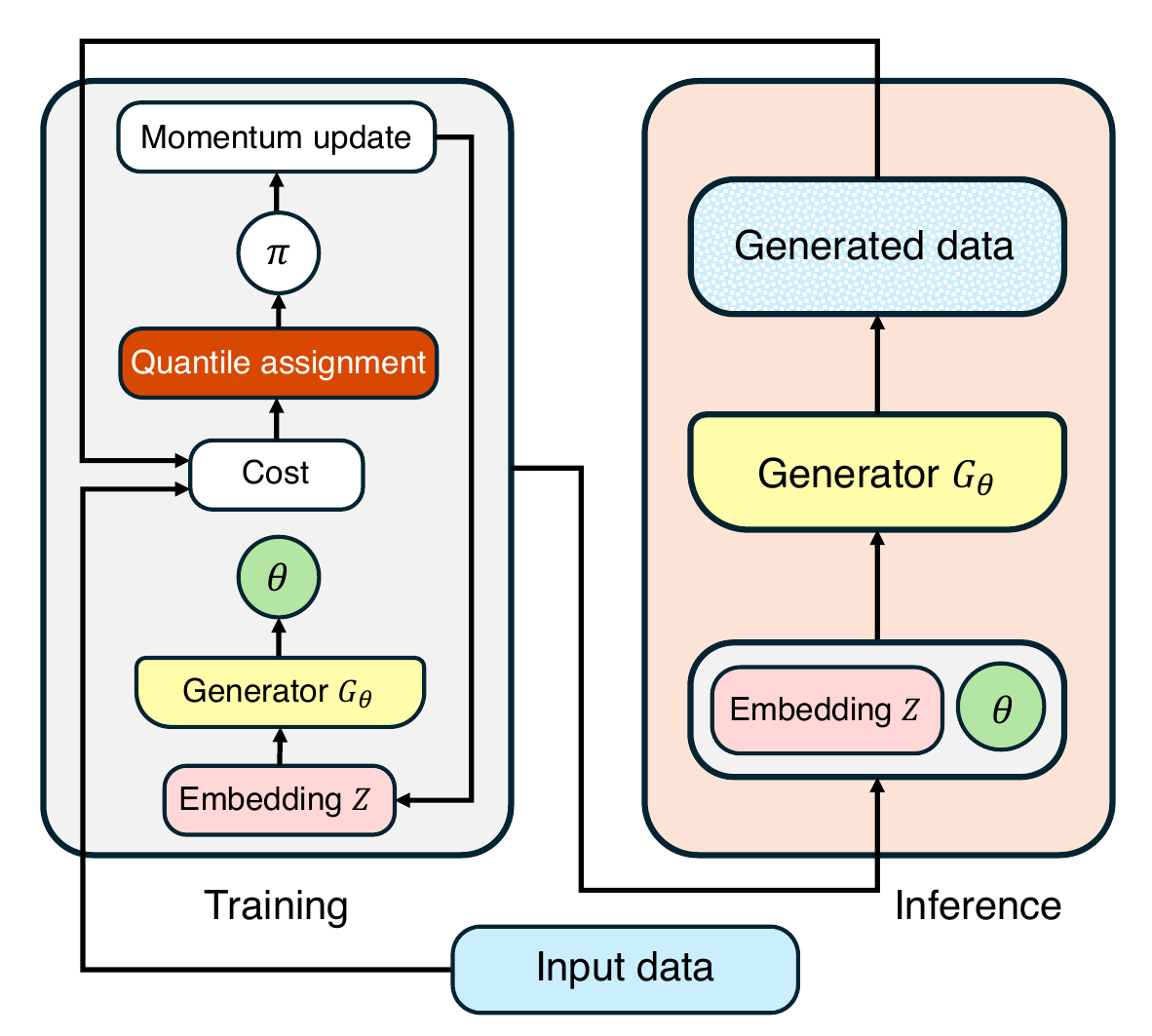}
  \caption{\textbf{A schematic representation of the \SQL architecture.}
  Left: Algorithm for optimizing latent embeddings and generator parameters $\theta$.
  Right: Data flow in \SQL for data synthesis.
  Here, ``cost'' refers to cost matrix entries:
  $C_{i,k} = \ell\big(X_i, G_\theta(Q_k)\big)$ solved by the linear assignment problem.
  ``Momentum update'' indicates:
  $\hat{Z}_{i}^{(t)} \leftarrow \rho\, Q_{\pi^{(t)}(i)} + (1-\rho)\, \hat{Z}_{i}^{(t-1)}$.}
  \label{fig:model-overview}
  \vspace{-1.5\baselineskip} 
\end{wrapfigure}

To address the limitations in existing methods, we propose NeuroSQL, a latent-variable DGM that does not rely on amortized inference (encoders), adversarial feedback (discriminators), or iterative sampling (diffusion). Instead, \SQL uses a discrete assignment mapping grounded in an asymptotic approximation: the unknown latent realizations can be represented, as $n\to\infty$, by a permutation of fixed (multi)variate quantiles of the assumed prior. This reduces latent estimation to identifying the permutation, which we show is identifiable and can be recovered by solving a linear assignment problem. In the multivariate setting, quantiles are constructed via an optimal-transport formulation, yielding a principled partition of the latent space into $n$ regions. \SQL generalizes Statistical Quantile Learning (SQL; \citealp{bodelet2025statistical}), which focused on additive models and univariate quantiles.

In its essence, \SQL trains a unique DNN for the generator, alongside an assignment procedure.
Compared to existing DGMs, \SQL offers several key benefits: 
It reduces the total parameter count, simplifies the training procedure to focus exclusively on the generator, avoids the curse of dimensionality typically encountered by auxiliary networks, and eliminates the instability challenges inherent in simultaneous multi-network optimization. 
Thus, \SQL is fast, stable, and resource-friendly, consistently delivering superior image quality under matched computational constraints, proving the conceptual benefits of this new paradigm.

\section{Methodology 
}
\label{sec:method}

\subsection{The model}

In what follows, we first state the model we aim to learn.
We consider a dataset $\mathcal{D}$ composed of $n$ independent samples of
$p$-dimensional random vectors, $\mathcal{D}=\{ \bm X_1, \dots, \bm X_n \}$.
We assume that the data are driven by unobserved continuous latent variables $\bm Z_i \in \mathcal{Z}\subseteq \R^d$. Specifically, we consider a probabilistic generative model:
\begin{equation}\label{eq. model}
\bm X_i = \bm G(\bm Z_i) + \bm \epsilon_i,
\end{equation}
where $\bm G:\mathcal{Z} \rightarrow \R^p$ is an unknown generator, and $\bm \epsilon_i$ are independent random errors.
We assume that the latent variables $\bm Z_i$ follow a known continuous distribution $P_Z$.
The latent dimension is typically (much) smaller than the ambient dimension ($d << p$) to enable dimensionality reduction.
This, therefore, contrasts with flow-based generative models, where the generator must be invertible.
As DGMs are not identifiable, one can select any distribution as long as it has a continuous cumulative distribution function.
Regarding the prior distribution, it is common to use either the standard normal distribution, $\mathcal{N}(\bm 0,\bm I)$, or a Uniform distribution.
We approximate the generator using deep neural networks (DNN),
that is $\bm G \approx \gen$, where:
$$
\gen \in \{ W_L \circ \sigmoid \circ W_{L-1} \circ \sigmoid \dots \circ \sigmoid  \circ W_{1} \},
$$
and $W_l$ denote affine transformations and $\sigmoid$ is an activation function.
The vector $\bt \in \Theta$ contains the DNN parameters.
We aim to learn both the generator parameter $\bt$ and the latent variables $\bm Z_i$.

\subsection{Latent space approximation and loss function}

For deep generative models, likelihood-based estimation is typically tractable only in simple settings (e.g., linear factor analysis). Jointly learning the generator and latent variables is, therefore, computationally demanding. In \SQL we adopt an approach inspired by the sieve method, which replaces an intractable optimization over a complex parameter space by tractable problems on a growing sequence of simpler subspaces that are dense in the original space (see, e.g., \cite{Chen2007}).

\SQL aims to approximate the latent space to obtain a tractable problem. It partitions the latent space into $n$ regions via quantiles $\bm Q^n_1,\dots,\bm Q^n_n\in\mathcal Z\subseteq\R^d$. Let $\bm Z:=(\bm Z_1,\dots,\bm Z_n)'$ and $\bm Q^n:=(\bm Q^n_1,\dots,\bm Q^n_n)'$ denote the $n\times d$ matrices of latent variables and quantiles, respectively. The key idea of \SQL is that there exists a permutation $\pi$ such that
\begin{equation}\label{eq. latent approximation}
\bm Z \approx \bm Q^n_{\pi},
\end{equation}
where, for a matrix (or column vector) $\bm A$ and a permutation $\pi$, $\bm A_{\pi}$ denotes the matrix obtained by permuting the rows of $\bm A$ according to $\pi$.

Since $\R^d$ has no canonical ordering, there is no universal definition of multivariate quantiles. For clarity, we formalize the approximation in \eqref{eq. latent approximation} for the univariate case ($d=1$); the $d$-dimensional construction is given in Section~\ref{sec. multivariate quantiles}.

For $d=1$, define the $n$ quantiles $\bm Q_i^n = F^{-1}\!\left(\frac{i}{n+1}\right)$, $i=1,\dots,n$, where $F$ is the CDF of $P_Z$. Using the delta method (see, e.g., \citealt{vandervaart2000asymptotic}), one has $\bigl|\bm Z_{(i)}-\bm Q_i^n\bigr|=\bigO_p\!\left(\frac{1}{\sqrt{n}}\right)$, where $\bm Z_{(i)}$ denotes the order statistics of the latent variables. Since the $\bm Z_{(i)}$'s are distinct almost surely, they are almost surely a permutation of the $\bm Z_i$'s, which implies the approximation error bound
\begin{equation}\label{eq. approximation error}
\min_{\pi \in S_n}\frac{1}{n}\sum_{i=1}^n \bigl|\bm Z_i-\bm Q^n_{\pi(i)}\bigr|^2
= \bigO_p\!\left(\frac{1}{n}\right),
\end{equation}
with $S_n$ the symmetric group of order $n$. Hence, the latent variables can be approximated by learning an appropriate permutation of the quantiles; the approximation error in \eqref{eq. approximation error} vanishes as $n\to\infty$.

Leveraging this approximation, we consider the criterion $\Ls(\bt,\pi)=\frac{1}{n}\sum_{i=1}^n \ell\!\left(\bm X_i,\gen\!\left(\bm Q^n_{\pi(i)}\right)\right)$ for a loss $\ell:\R^p\times\R^p\to\R_+$. We define the \SQL solutions by
\begin{equation}\label{eq:objective}
(\hat{\bt},\hat{\pi})
= \argmin_{\bt\in\Theta,\;\pi\in S_n}\; \Ls(\bt,\pi) \;+\; \lambda\,\mathcal R(\bt),
\end{equation}
where $\mathcal R(\bt)$ controls the complexity of $\gen$ and $\lambda>0$ is a tuning parameter. Given $\hat{\pi}$, the latent-variable estimator is $\bhat Z=\bm Q^n_{\hat{\pi}}$.

\subsection{\texorpdfstring{Multivariate quantiles ($d>1$)}{Multivariate quantiles (d>1)}}
\label{sec. multivariate quantiles}

Building on the univariate case, here we extend \SQL to $d>1$. We discuss how \SQL approximates the latent space in higher dimensions and show that the approximation error vanishes as $n \rightarrow \infty$. 
As there is no canonical ordering when $d>1$, 
several methods have been developed to construct ``multivariate quantiles''.
We concentrate on recent developments that leverage the optimal transport approach \citep{hallin2022measure, chernozhukov2017monge,hallin2021distribution,ghosal2022multivariate}, which offer a conceptually clean and practical way to define quantiles in multivariate dimensions. 

To build those multivariate quantiles, we use a regular grid, $\bm U_1, \bm U_2, \dots, \bm U_n \in \mathcal{U}_d$, where $\mathcal{U}_d:= \{\bm z \in \R^d: \Vert \bm z \Vert < 1\}$ is the unit ball.
This grid does not have to be perfectly regular, as it is in general not possible for $d\geq 3$.
We require only that the discrete distribution with probability $1/n$ at each grid point converges weakly to the uniform distribution over $\mathcal{U}_d$.
In practice, it is suitable to select a grid with a low discrepancy 
in order to obtain fast convergence rates.

In the case where the latent variables are uniformly distributed over $\mathcal{U}_d$, we define the multivariate quantiles as $\bm Q^n_i := \bm U_i$.
For more general distributions $F$, we define the multivariate quantiles as $\bm Q^n_i:= F_{\pm}^{-1}(U_i)$, where $F_{\pm}$ denotes the center-outward distribution function. Specifically, $F_\pm$ is the unique gradient of a convex function pushing $P_Z$ forward to the uniform distribution over the unit ball.
We refer to \cite{hallin2021distribution} and \cite{hallin2022measure} for a detailed explanation.
Without loss of generality, we will assume here that the distribution $P_Z$ is uniform over $\mathcal{U}_d$, yielding $\bm Q^n_i= \bm U_i$. 
This is reasonable because the latent distribution of (deep) generative models is not identifiable and should be selected (we refer to  \citealp{bodelet2025statistical} for a discussion).

The following proposition shows that the approximation error also vanishes asymptotically in the multivariate case.

\begin{proposition}\label{prop.}
Assume that the discrete distribution with probability $1/n$ at each grid point $\bm U_1, \bm U_2, \dots, \bm U_n \in \mathcal{U}_d$ converges weakly to the uniform distribution over $\mathcal{U}_d$. Then, as $n \rightarrow \infty$, the following holds:
$$ \min_{\pi \in S_n} \Vert \bm Z -\bm Q^n_\pi \Vert^2 \rightarrow 0, \text{ a.s. }$$
\end{proposition}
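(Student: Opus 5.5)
Throughout, I read $\Vert \cdot \Vert^2$ in the statement as the normalized squared Frobenius norm $\frac1n\sum_{i=1}^n \Vert \bm Z_i - \bm Q^n_{\pi(i)}\Vert^2$, in line with \eqref{eq. approximation error}. The unnormalized sum of $n$ terms cannot be expected to vanish. With this reading, the plan is to identify the minimum over permutations with a squared $2$-Wasserstein distance between two empirical measures. The claim then follows from the triangle inequality for $W_2$ and two weak-convergence facts.

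\textbf{Step 1 (reduction to optimal transport).} Let $\mu_n := \frac1n\sum_{i=1}^n \delta_{\bm Z_i}$ and $\nu_n := \frac1n\sum_{i=1}^n \delta_{\bm U_i}$, where $\bm Q^n_i=\bm U_i$ under the uniform reduction made above. Any coupling of $\mu_n$ and $\nu_n$ can be written as $\frac1n P$ for an $n\times n$ doubly stochastic matrix $P$. This holds even when atoms coincide, by working with the indexed points. The transport cost $\frac1n\sum_{i,k}P_{ik}\Vert \bm Z_i-\bm U_k\Vert^2$ is linear in $P$. By the Birkhoff--von Neumann theorem, its minimum over the Birkhoff polytope is attained at a permutation matrix. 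Hence
\[
\min_{\pi\in S_n}\frac1n\sum_{i=1}^n \Vert \bm Z_i-\bm U_{\pi(i)}\Vert^2 = W_2^2(\mu_n,\nu_n).
\]
This identification is the step I expect to require the most care, because it is what turns a combinatorial quantity into a metric one. It is nonetheless a standard fact: it is the same linear assignment structure that \SQL exploits computationally.

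\textbf{Step 2 (triangle inequality).} Let $P_Z$ be the uniform distribution on $\mathcal U_d$. Then
\[
W_2(\mu_n,\nu_n)\le W_2(\mu_n,P_Z)+W_2(P_Z,\nu_n).
\]
All measures involved are supported in the closed unit ball, which is compact. On a compact set, $W_2$ metrizes weak convergence (Villani, Optimal Transport, Thm.~6.9). It therefore suffices to show that $\mu_n\to P_Z$ weakly almost surely and $\nu_n\to P_Z$ weakly.

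\textbf{Step 3 (the two convergences).} The convergence of $\nu_n$ is deterministic and is exactly the hypothesis on the grid. For $\mu_n$, the $\bm Z_i$ are i.i.d.\ from $P_Z$. By Varadarajan's theorem (the almost sure weak convergence of empirical measures on separable metric spaces), $\mu_n\to P_Z$ weakly almost surely. Alternatively, one can apply the strong law of large numbers to a countable convergence-determining class of bounded continuous functions and intersect the resulting null sets.

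Combining the three steps, both terms in Step 2 vanish almost surely, so $W_2^2(\mu_n,\nu_n)\to0$ almost surely. By Step 1, this is the claim.
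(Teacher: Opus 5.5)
Your proof is correct. Your normalization caveat is also justified. The unnormalized sum equals $n\,W_2^2(\mu_n,\nu_n)$, which does not tend to zero: already for $d=1$ and a regular grid the optimal matching is the sorted one, and the sum has a nondegenerate Brownian-bridge limit. So the proposition can only hold with the $1/n$ normalization of \eqref{eq. approximation error}. The paper's own ``relatively straightforward'' final step likewise delivers only that normalized version.

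Your route differs from the paper's in the convergence tool. The paper also starts from the assignment/optimal-transport identification. It then reads $\bm U_{\pi^*(i)}$ as the empirical center-outward distribution function $F^n_\pm(\bm Z_i)$ and invokes the Glivenko--Cantelli theorem of Hallin et al.\ (Theorem 2.4), $\max_{1\le i\le n}\Vert F^n_\pm(\bm Z_i)-F_\pm(\bm Z_i)\Vert\to0$ a.s., together with $F_\pm=\mathrm{id}$ for the uniform law on the ball.

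The paper's route gives a strictly stronger, uniform conclusion, $\max_i\Vert \bm Z_i-\bm U_{\pi^*(i)}\Vert\to0$ a.s. That is, every individual latent is eventually close to its assigned quantile, and the averaged claim follows at once. The cost is relying on a substantially deeper result, built on McCann's theorem and the stability of cyclically monotone couplings, used as a black box.

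Your argument needs only Birkhoff--von Neumann, the fact that $W_2$ metrizes weak convergence on a compact set, and Varadarajan's theorem. It is therefore self-contained and more general. It never uses the center-outward map. It applies verbatim to any compactly supported prior with a grid whose empirical measure converges weakly to it. It also extends to unbounded priors such as the Gaussian, with $\bm Q^n_i=F_\pm^{-1}(\bm U_i)$, once convergence of the grid's second moments is added. In addition, your triangle-inequality split separates the sampling error $W_2(\mu_n,P_Z)$ from the deterministic grid discrepancy $W_2(P_Z,\nu_n)$, which makes the paper's remark about low-discrepancy grids and rates quantitative. What your route does not give is the uniform-in-$i$ control.
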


The proof of Proposition \ref{prop.} is in the Appendix \ref{app:proof}, following Theorem 2.4 
in \cite{hallin2021distribution}.



\subsection{Computational algorithm}
\label{subsec:algorithm}

We note that, for fixed $\bt$, the inner problem in Eq \ref{eq:objective} can be formulated as a \emph{linear assignment problem}:
\[
\min_{\pi\in S_n}\ \frac{1}{n}\sum_{i=1}^n C_{i,\pi(i)},\qquad C_{i,k}:=\ell\big(\bm X_i, \gen(\bm Q_k)\big),
\]
where $\bm C := (C_{i,k})_{1\leq i,k \leq n}$ is the cost matrix.
This is solvable \emph{exactly} by the Hungarian method in $O(n^3)$ time \citep{kuhn1955hungarian}, but see Sec \ref{sec:complexity} where one can reduce complexity to $O(n^2)$ and also do assignment in mini-batches.
Furthermore, for fixed $\pi$, Eq \ref{eq:objective} reduces to standard supervised regressions of $\bm X$ on assigned codes $\{z_{\pi(i)}\}$.
We therefore solve Eq \ref{eq:objective} as follows:
(i) Given a permutation $\pi$, minimize the loss function with respect to $\bt$ (Generator step);
(ii) Given $\bt$, we solve the linear assignment matching problem (via Hungarian $O(n^3)$ or Greedy method) which reduces the complexity to $O(n^2)$.
We iterate (i) and (ii) until convergence.
Furthermore, we introduce a momentum update after each assignment, that is 
$\widehat z^{(t)}\!\leftarrow\!\rho\,z_{\pi^{(t)}(i)}+(1-\rho)\,\widehat z^{(t-1)}$ for some $0\leq \rho < 1$, in order to stabilize training.
The exact steps are detailed in Algorithm \ref{alg:neurosql}.

\begin{algorithm}[t]
\caption{\SQL\ (full-batch assignment)}
\label{alg:neurosql}
\footnotesize
\begin{algorithmic}[1]
\State \textbf{Input:} data $\{\bm X_i\}_{i=1}^n$, prior $P_Z$, lattice $\bm Q^{n}$,
outer iters $T$, momentum $\rho\in[0,1)$, $\lambda>0$
\State Initialize $\pi^{(0)}$ (e.g., random or PCA-sorted);
\Statex \hspace{\algorithmicindent} set $\widehat{\bm Z}^{(0)} = (\bm Q^{n})_{\pi^{(0)}}$
\For{$t=1,\dots,T$}
  \State \textbf{Decoder step:}
  \Statex \hspace{\algorithmicindent}%
  $\bm\theta^{(t)} \in \arg\min_{\bm\theta}\Big[
    \begin{aligned}[t]
      &\frac{1}{n}\sum_{i=1}^n \ell\!\big(\bm X_i,\bm G_{\bm\theta}(\widehat{\bm Z}_i^{(t-1)})\big)
      + \lambda \mathcal R(\bm\theta)
    \end{aligned}
  \Big]$
  \State \textbf{Cost matrix:} $C_{i,k}^{(t)} \leftarrow \ell(\bm X_i,\bm G_{\bm\theta^{(t)}}(\bm Q_k))$
  \State \textbf{Assignment:} $\pi^{(t)} \leftarrow \arg\min_{\pi\in S_n}\Tr(\bm C_{\pi}^{(t)})$
  \State \textbf{Momentum:} $\widehat{\bm Z}_i^{(t)} \leftarrow
    \rho\,(\bm Q^{n})_{\pi^{(t)}(i)} + (1-\rho)\,\widehat{\bm Z}_i^{(t-1)}$
\EndFor
\State \textbf{Output:} $\widehat{\bm\theta}=\bm\theta^{(T)}$, $\widehat{\bm Z}=\widehat{\bm Z}^{(T)}$.
\end{algorithmic}
\end{algorithm}


\subsection{Complexity and scalability}
\label{sec:complexity}

Each outer iteration forms the cost matrix $C^{(t)}$ using $n$ decoder forward passes over ${z_k}$ and then solves a single assignment problem, for a total complexity of $O(n \cdot c_G + n^3)$, where $c_G$ is the decoder’s forward cost. Fundamentally, the assignment step is independent of the data dimension $p$, which is why \SQL scales favorably to very high-dimensional observations.

The $O(n^3)$ Hungarian algorithm can become a bottleneck with large $n$. To reduce this overhead (especially for $n \gtrsim 10^4$), we also propose a simple greedy assignment with $O(n^2)$ complexity (see Section~\ref{appendix:greedy} for details). In our benchmarks, the greedy strategy provides a $6.54\times$ speedup, reducing the mean time per call from $\approx 89$ms to $\approx 13$ms when loading the full batch, without a noticeable change in performance.

Across datasets, this substitution does not materially affect the qualitative quality of generated samples, making greedy assignment a practical choice in higher-throughput settings. For ultra-large datasets, one can further run \SQL in a mini-batch regime with batch size $m \ll n$, reducing assignment cost to $O(m^2)$ and making the computation of \SQL independent of the total dataset size.




\subsection{Interpretability of the Lattice Codes}

\begin{wrapfigure}{r}{0.46\textwidth}
  \centering
  \begin{subfigure}[b]{0.49\linewidth}
    \centering
    \includegraphics[scale=0.2, trim= 1.2cm 1.2cm 3cm 0.6cm, clip]{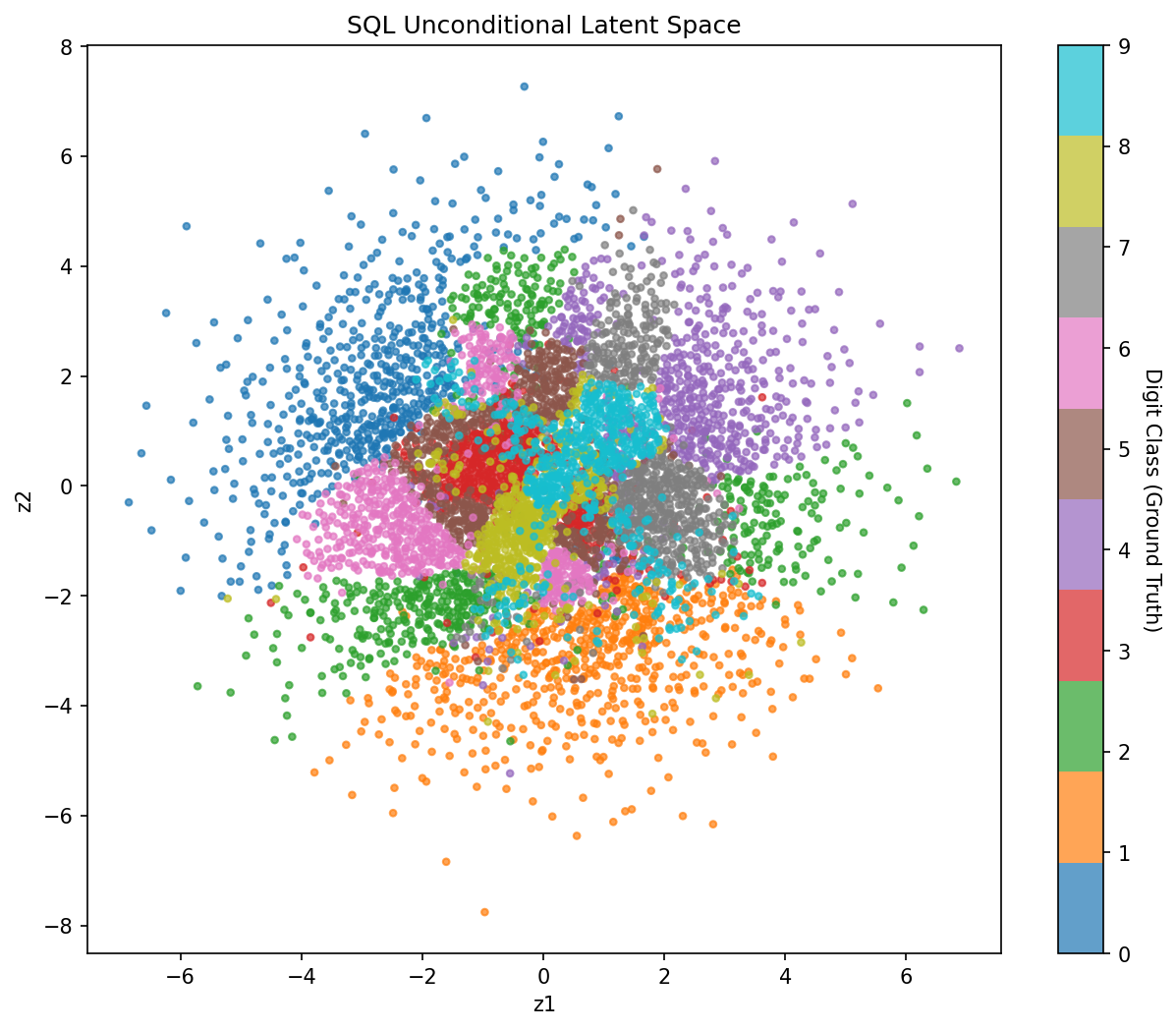}
    \caption{\SQL}
    \label{fig:mnist-neurosql-latent}
  \end{subfigure}\hfill
  \begin{subfigure}[b]{0.49\linewidth}
    \centering
    \includegraphics[scale=0.2, trim= 1.2cm 1.2cm 0.8cm 0.6cm, clip]{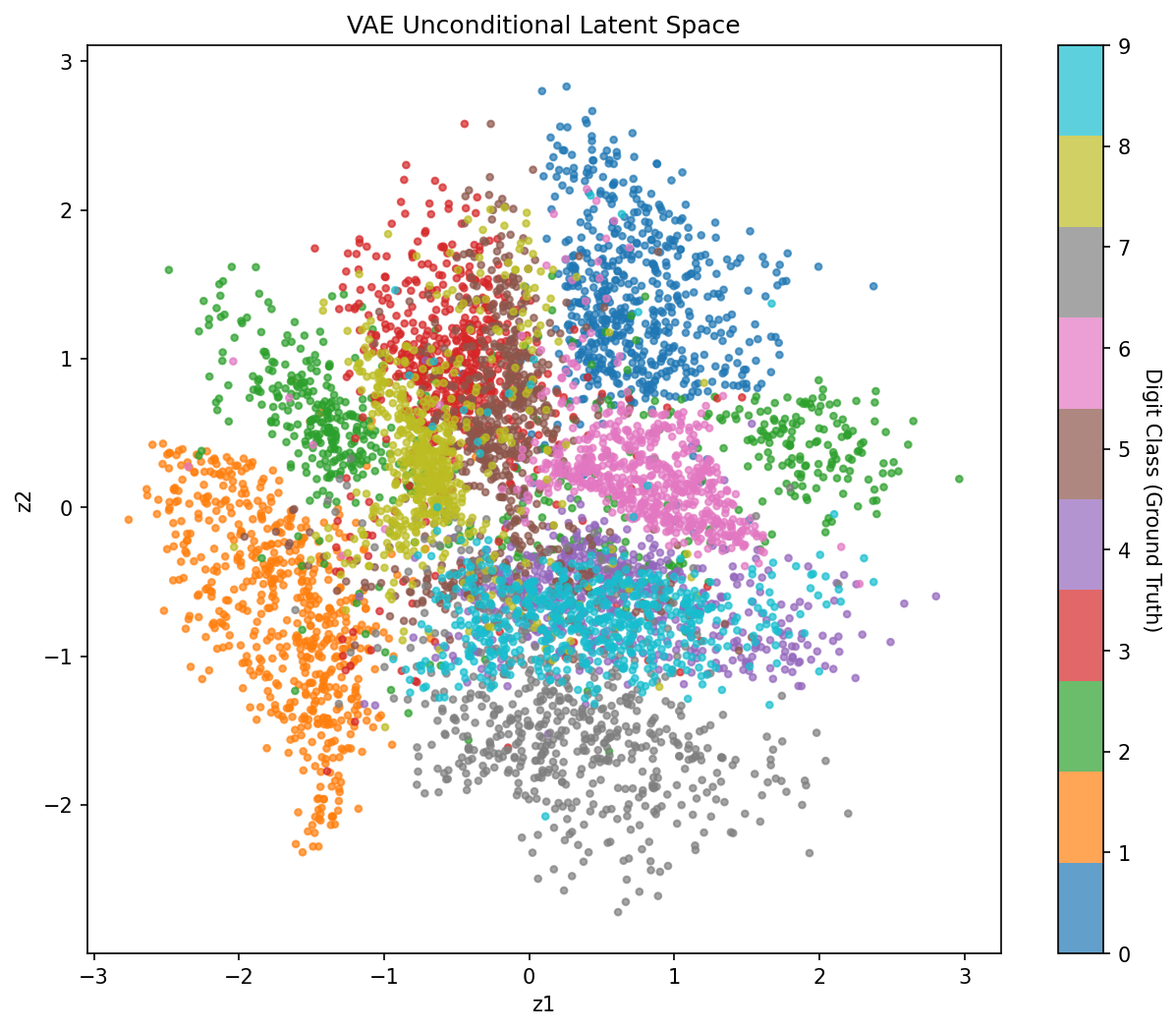}
    \caption{VAE}
    \label{fig:mnist-vae-latent}
  \end{subfigure}

  \caption{\textbf{A Comparison of Latent Space of \SQL and VAE.}
  Visualization of the two-dimensional latent space obtained from MNIST using \SQL (left) and VAE (right).
  NeuroSQL’s latent space forms more distinct, better-separated clusters than those of the VAE, whose clusters show more overlaps.}
  \label{fig:latent_interpretability}
    \vspace{-1.5\baselineskip}

\end{wrapfigure}

\SQL replaces the stochastic VAE encoder with a deterministic and explainable assignment procedure, hypothesizing that removing variational noise yields a more structured latent space. To probe this, we visualize 2D MNIST embeddings for \SQL and a VAE baseline with the same generator. As shown in Fig.~\ref{fig:latent_interpretability}b, the VAE latent space forms a characteristic “fuzzy cloud” under the Gaussian prior $\mathcal{N}(0,I)$: the KL regularization term crowds samples toward the origin, and class regions overlap substantially. This phenomenon is most prominent in the dense center, where digits such as 3, 5, and 8 become difficult to disentangle. Thus, a more precise latent analysis becomes harder.

In contrast, NeuroSQL (Fig.~\ref{fig:latent_interpretability}a) produces a more separated geometry. By approximating latents via a linear assignment to a fixed, high-variance lattice, it uses the space more evenly (roughly $[-8,8]$ versus $[-3,3]$ for the VAE) and organizes digit classes into compact, well-isolated “islands” (e.g., `1’ and `0’ form tight clusters with minimal spillover). This suggests that the Optimal-Transport-based assignment preserves semantic discreteness without supervision and reduces the overlap seen with the VAE, supporting a more transparent mapping between latent codes and generated samples.

\section{Experiments}
\label{sec:data-exp}
We evaluate \SQL under a \emph{sparse-resource} regime across four data domains of increasing structural complexity: digits (MNIST), human faces (CelebA), animal faces (AFHQ), and brain imaging (OASIS). More details on the data are in Section \ref{subsec:datasets}.

For comparison, we consider ConvNet, ResNet, and U\mbox{-}Net generators.
Our emphasis lies on \emph{paradigm evaluation}: models share (as closely as possible) the same generator backbone, data budgets, and optimization schedules, so that if there are any differences, they are likely to arise from the learning principle (quantile–assignment vs.\ probabilistic/adversarial/denoising) rather than model capacity or compute power.
As part of the evaluation pipeline, we train a downstream classifier using synthetic images.

\vspace{-2mm}
\subsection{Models and training}
\label{subsec:protocol-summary}



To ensure that performance differences are not attributable to architectural or training advantages,
we compare NeuroSQL, VAE, and GAN under the same three models: ConvNet(baseline), ResNet, and U-Net. Since Diffusion Probabilistic Models require a specific denoising architecture (usually MLP or a U-Net) and cannot use the controlled generator backbone(ConvNet/ResNet) shared across \SQL and the other two baselines, we matched the DDPM by scaling its width. This was achieved by implementing the DDPM with a custom denoising U-Net whose channel width was scaled to match the trainable parameter count of the competing generators within a $\pm10\%$ margin, ensuring comparable model capacity.

\begin{wrapfigure}{r}{0.52\textwidth}
  \centering
  \includegraphics[width=\linewidth]{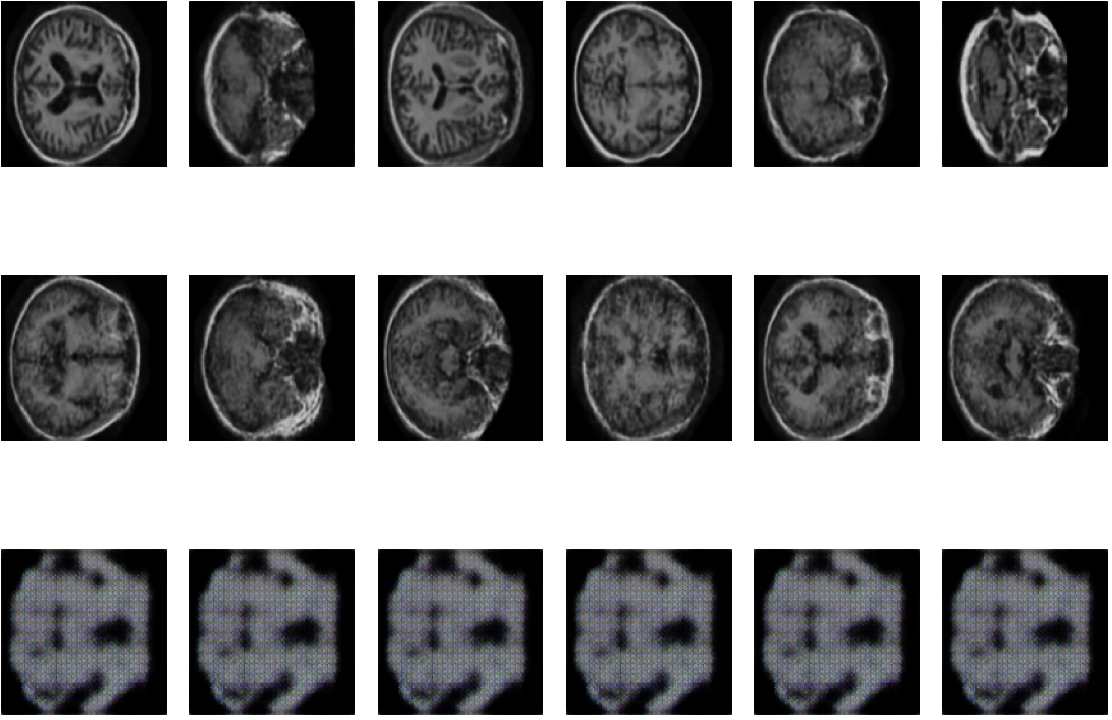}
  \caption{2D brain images generated from \SQL (top row), VAE (middle row) and GAN (bottom row) with U-Net generator.}
  \label{fig:oasis-generated}
      \vspace{-2.8\baselineskip} 

\end{wrapfigure}

For all methods, we matched computational budgets by fixing the total number of training iterations and using identical optimizer schedules (AdamW) and early stopping, thereby strictly controlling for both model size and compute time. Implementation details are in Appendix \ref{app:implementation details}.

\subsection{Evaluation}

To quantify the model performance, we report, for each scenario, a proxy of FID (Fréchet Inception Distance; lower is better), LPIPS (Learned Perceptual Image Patch Similarity; lower is better), and SSIM (Structural Similarity Index; higher is better). Discussions on the metrics and datasets are in Sections \ref{subsec:datasets} and \ref{subsec:metrics} of the Appendix. We present the results across a wide range of experimental specifications in Appendix~\ref{tab:results}.

\subsection{Results} 

For demonstration, we compare and discuss \textsc{NeuroSQL}'s performance against that of VAE and GAN on generating brain imaging data and MNIST digits. We include a comparison between \SQL and diffusion models as well as model performance of these models on human and animal faces in the Appendix.

For the brain imaging data (OASIS), we observe that, overall, \SQL outperforms both VAE and GAN across various combinations of latent dimension and generator. Particularly, \SQL outperforms VAE and GAN in terms of LPIPS (measuring perceptual similarity between images (i.e., how similar they look to humans) and SSIM (measuring pixel-level structural similarity) scores in all scenarios. All models with a ConvNet generator yield much better results compared to those with a ResNet. The choice of generator, however, has a marginal effect on FID scores. For FID, which measures distribution similarity between generated and real images in feature space and pixel distance, \SQL outperforms VAE and GAN in all cases with a ResNet generator. It outperforms VAEs and GANs when the latent dimension is moderate (between 16 and 64) with a ConvNet generator. When the latent dimension is very small or very large, VAE with ConvNet shows only modest improvement. With a U\mbox{-}Net generator, \SQL attains the best LPIPS and SSIM, while the FID (proxy) varies and is lowest for VAE on average.


Taken together, our results demonstrate the overall effectiveness of \SQL compared to existing methods in generating synthetic data across different domains that are unrelated to each other. In particular, \SQL with a ConvNet generator achieves superior performance across different scenarios, particularly in metrics evaluating perceptual similarity between images (how similar they look to humans) and pixel-level structural similarity, and its performance improves as the dimension of its latent space increases.

\begin{wraptable}{r}{0.58\textwidth}
  \centering
  \caption{Performance comparison across datasets (mean $\pm$ std). Lower is better for proxy FID; higher is better for SSIM.}
  \label{tab:main_results}
  \scriptsize
  \setlength{\tabcolsep}{3pt}
  \renewcommand{\arraystretch}{1.05}
  \begin{tabular}{llccc}
    \toprule
    \textbf{Dataset (Res.)} & \textbf{Model} &
    \textbf{pFID}$\downarrow$ & \textbf{SSIM}$\uparrow$ & \textbf{P} \\
    \midrule
    \textbf{MNIST (28$\times$28)} & \SQL      & \textbf{0.61$\pm$0.12} & \textbf{0.616$\pm$0.074} & \textbf{2.79} \\
                                 & VAE       & 1.26$\pm$0.26          & 0.179$\pm$0.030          & 4.17 \\
                                 & GAN       & 2.00$\pm$0.22          & 0.233$\pm$0.018          & 5.56 \\
                                 & Diffusion & 0.65$\pm$0.34          & 0.492$\pm$0.055          & 147.91 \\
    \addlinespace
    \textbf{CelebA (64$\times$64)} & \SQL      & \textbf{5.81$\pm$4.05} & \textbf{0.262$\pm$0.036} & \textbf{7.54} \\
                                  & VAE       & 10.75$\pm$7.92         & 0.196$\pm$0.047          & 14.03 \\
                                  & GAN       & 18.02$\pm$6.78         & 0.137$\pm$0.034          & 10.33 \\
                                  & Diffusion & 23.79$\pm$8.11         & 0.013$\pm$0.066          & 147.91 \\
    \addlinespace
    \textbf{AFHQ (128$\times$128)} & \SQL      & \textbf{19.03$\pm$11.31} & \textbf{0.290$\pm$0.056} & \textbf{119.58} \\
                                  & VAE       & 39.15$\pm$30.64          & 0.190$\pm$0.051          & 144.98 \\
                                  & GAN       & 46.00$\pm$14.83          & 0.082$\pm$0.029          & 122.37 \\
                                  & Diffusion & 22.99$\pm$14.83          & 0.0388$\pm$0.032         & 147.91 \\
    \addlinespace
    \textbf{OASIS (128$\times$128)} & \SQL      & \textbf{16.36$\pm$12.25} & \textbf{0.252$\pm$0.010} & \textbf{243.74} \\
                                   & VAE       & 24.24$\pm$24.46          & 0.196$\pm$0.058          & 269.09 \\
                                   & GAN       & 68.23$\pm$15.52          & 0.145$\pm$0.063          & 246.50 \\
                                   & Diffusion & 21.22$\pm$23.71          & 0.04759$\pm$0.058        & 252.66 \\
    \bottomrule
  \end{tabular}
\vspace{-2.0\baselineskip} 

\end{wraptable}

A more comprehensive quantitative comparison is detailed in Table \ref{tab:main_results}, which reports the mean performance metrics across all tested architectures and latent dimensions. These results confirm that \SQL delivers a robust balance of image quality and structural similarity across digits, faces, and brain imaging data with the smallest amount of trainable parameters.

\subsection{MNIST Downstream Classifier}

To further evaluate the quality of the generated samples from NeuroSQL, we train a downstream classifier using sampled images. We trained a standard CNN classifier solely on 10k synthetic images generated by \SQL and evaluated it on real MNIST data. The model achieved \textbf{87.71\% accuracy, 0.88 precision, and 0.88 recall}. Notably, our method significantly outperforms the VAE baseline, which achieved 67.49\% accuracy under the same conditions. Therefore, the generated samples likely effectively capture the class-conditional distributions of the underlying manifold.

\begin{table}[h]
\centering
\caption{Downstream classification performance on MNIST. A classifier was trained solely on synthetic data (10k samples) generated by \SQL and VAE and evaluated on real test data.}
\label{tab:downstream_mnist}
\small
\setlength{\tabcolsep}{4pt} 
\begin{tabular}{lccc}
\toprule
\textbf{Method} & \textbf{Prec.} & \textbf{Rec.} & \textbf{Acc.} \\
\midrule
\textsc{\textbf{NeuroSQL}} & \textbf{0.8788 }& \textbf{0.8763} & \textbf{0.8771} \\
VAE  & 0.7236 & 0.6705 & 0.6749 \\
\bottomrule
\end{tabular}
\end{table}

\vspace{-2mm}
\subsection{Compute budget, data scale, and evaluation scope} 
\label{subsec:budget}
\noindent

\begin{wrapfigure}{r}{0.62\textwidth}
  \centering

  \begin{subfigure}[b]{0.32\linewidth}
    \centering
    \includegraphics[width=\linewidth, trim=0 0 23cm 0cm, clip]{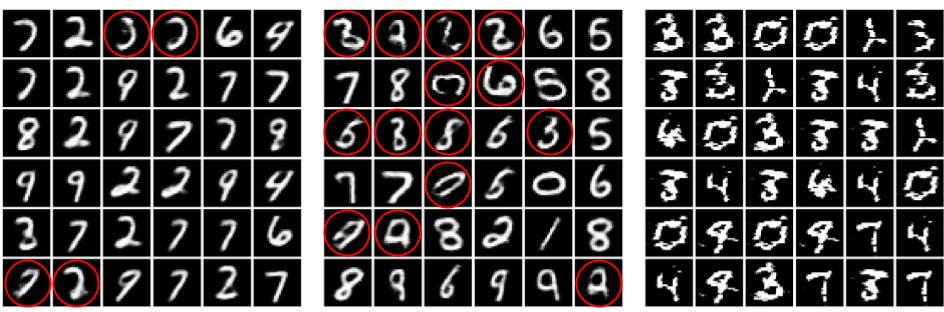}
    \caption{\SQL}
    \label{fig:mnist-neurosql}
  \end{subfigure}\hfill
  \begin{subfigure}[b]{0.32\linewidth}
    \centering
    \includegraphics[width=\linewidth, trim=11.5cm 0 11.5cm 0cm, clip]{figs/output-digits.png}
    \caption{VAE}
    \label{fig:mnist-neurosql-vae}
  \end{subfigure}\hfill
  \begin{subfigure}[b]{0.32\linewidth}
    \centering
    \includegraphics[width=\linewidth, trim=23cm 0 0 0cm, clip]{figs/output-digits.png}
    \caption{GAN}
    \label{fig:mnist-neurosql-gan}
  \end{subfigure}

  \caption{Qualitative comparison of 36 randomly generated images for models trained on MNIST.}
  \label{fig:mnist-generated}
    \vspace{-1.0\baselineskip}

\end{wrapfigure}

All experiments are designed to run end-to-end on a single Google Colab under a fixed allowance of \emph{200 compute units}. Within this budget, we cap the training set at 2000 images and focus on resolutions in the range $64{\times}64$–$128{\times}128$ (apart from MNIST), which in turn controls model capacity and the number of optimizer updates. Under these conditions, diffusion models underperform: with $T{=}1000$ diffusion steps and $N{\approx}1000$–$2000$ training images, supervision per noise level scales as $O(N/T)$, producing high-variance score estimates; moreover, training and sampling cost scale with $T$, which further reduces effective learning progress at a fixed wall-clock budget.

Despite the advantages of NeuroSQL, we highlight that our main goal with this work is a \textbf{paradigm-level proof of concept}. We aim not to compete with large-scale, high-resolution diffusion pipelines, but to demonstrate that \SQL provides a viable training principle for generative modeling in compute- and data-frugal regimes, such as in small research labs and non-profit organizations where pretraining, heavy augmentation, or distillation entail substantial challenges.

\section{Related Work}

\noindent\textbf{Discrete latent representations and vector quantization.} 
Discrete latent space models, including VQ-VAE~\citep{van2017neural}, VQ-VAE-2~\citep{razavi2019generating}, and VQ-GAN~\citep{esser2021taming}, utilize learned codebooks to map inputs to discrete vectors, a paradigm further advanced by masked~\citep{chang2022maskgit} and autoregressive~\citep{tian2024visual} modeling. While \textsc{NeuroSQL} also employs discrete representations, it fundamentally departs from this framework by replacing the learned encoder and its associated sample complexity with a fixed \textit{a priori} quantile lattice, determining embeddings via combinatorial assignment rather than gradient-based inference.

\textbf{Optimal Transport and Rectified Flows.}
A related and rapidly evolving line of work addresses this bottleneck by combining generative modeling with optimal transport (OT). 
Rectified Flows \citep{liu2022flow} formulate generative modeling as learning straight ODE trajectories between noise and data distributions, with subsequent work improving training efficiency \citep{lee2024improving}, achieving one-step generation \citep{geng2025meanflow}, and enabling forward-only regression training \citep{rehman2025fort}. These methods share a common paradigm: parameterizing continuous neural transport maps optimized to minimize Wasserstein distance, enabling few-step sampling. \SQL uses OT fundamentally differently. Rather than learning continuous trajectories, we use OT-based multivariate quantiles \citep{hallin2021distribution,chernozhukov2017monge,hallin2022measure,ghosal2022multivariate} to define a fixed lattice that partitions the latent space, then solve discrete assignment problems to match data to quantiles.

\noindent\textbf{DGM for low sample regimes.}
Existing approaches for low-sample generation, ranging from data augmentation to specialized architectures like FastGAN~\citep{liu2021towards}, generally remain within paradigms that require auxiliary networks to process high-dimensional data. Consequently, these methods face severe sample complexity challenges when the sample size is small relative to the data dimension ($n \ll p$). \textsc{NeuroSQL} addresses this bottleneck by eliminating encoders and discriminators, thereby sidestepping the difficulty of learning functions from high-dimensional observation spaces.

\section{Conclusions}  
\label{sec. conclusions}

In this paper, we introduced NeuroSQL, a deep generative model that replaces stochastic encoders with rank-based quantile assignment. By learning embeddings through assignment rather than amortised inference, \SQL eliminates the posterior collapse typical of VAEs and avoids the adversarial dynamics of GANs. Furthermore, the deterministic assignment mechanism yields distributions substantially more interpretable than those of GANs, diffusion models, or VAEs. Unlike diffusion models, which require extensive denoising and large datasets, \SQL demonstrates superior performance in constrained settings (under $10^5$ samples) by avoiding high-variance score estimation. Evaluated on MNIST, CelebA, AFHQ, and OASIS, the model proves consistently competitive and often superior under matched experimental settings.

The primary contribution of \SQL is the introduction of a new principle for developing DGMs, prioritizing methodological innovation over high-capacity architecture. While our experiments validated the paradigm under controlled conditions, future work can evaluate its scalability to higher resolutions and comparisons against baselines like StyleGAN3. In terms of applications, particularly in medical data science, a natural next step is to experiment with the MedMNIST \citep{Yang_2023} benchmark suite. Moving forward, we foresee two key directions for NeuroSQL: (i) scaling the assignment mechanism to new modalities (e.g., audio, 3D), and (ii) expanding the theory of quantile-assignment training beyond its current empirical scope.

\newpage

\bibliography{iclr2026/sql_jasa,iclr2026/neural_sql,iclr2026/related_work}
\bibliographystyle{iclr2026_conference}

\newpage
\appendix
\onecolumn
\section{Acknowledgments}
We acknowledge the use of large language models for grammar checking, punctuation correction, spelling verification, and synonym suggestions to enhance writing clarity.

\section{Ethical and domain-specific notes (OASIS)}
\label{subsec:ethics}

In order to keep the experimental setup free of data leakage within the train/val/test splits, we performed subject-stratified cross-validation. We highlight that one must not equate synthetic imaging data with clinical data. Synthetic imaging data, however, may have practical utilities, such as for treating missing data, but this is beyond the scope of this paper. Here, we generate synthetic data to demonstrate the efficacy of \SQL as a generative model; we use the generated imaging data to evaluate the model's performance; we do not claim its clinical utility. Future work should verify this independently, and we will release seeds, splits, and scripts to facilitate further validations.

\vspace{-2mm}
\section{Datasets}
\label{subsec:datasets}

\noindent\textbf{MNIST \citep{lecun1998gradient}.} Handwritten digits ($60\mathrm{k}$ train, $10\mathrm{k}$ test, $28{\times}28$ grayscale). We replicate to RGB for evaluation only.

\noindent\textbf{CelebA \citep{liu2015faceattributes}.} Face dataset ($202{,}599$ images in full) of which we use around 2500 by preprocessing them. We use them center-cropped and resized to $128{\times}128$. Training is unconditional despite available attributes.

\noindent\textbf{AFHQ \citep{choi2020stargan}.} Animal Faces HQ contains $15{,}000$ high-quality animal face images across cats, dogs, and wildlife at $512{\times}512$ resolution originally. Within experiments, image size was reduced to $128{\times}128$ and total number of images used was about $2000$.

\noindent\textbf{OASIS \citep{marcus2007open,ninadaithal2023imagesoasis}.} Neuroimaging dataset with $\sim 80\mathrm{k}$ MRI slices from $416$ subjects, downsampled to $128{\times}128$. Medical images test resistance to overfitting in constrained domains. We used the version of OASIS that is publicly available on Kaggle containing a chunk of the whole dataset. It is pre-structured and is organized in four subfolders, namely: Mild Dementia, Moderate Dementia, Non Demented, and Very Mild Demented. We use it for the fact that it is publicly available.
\vspace{-2mm}

\section{Implementation details}
\label{app:implementation details}

\noindent\textbf{Generator backbones.}
Within each experimental domain, NeuroSQL, VAE, and GAN share the \emph{\textbf{identical}} generator architecture and initialization.
The following generators were used.

\textbf{ConvNet (Baseline)} is a parameter-efficient standard deconvolutional network (similar to DCGAN). It consists of a linear projection followed by a stack of transposed convolution layers with halving channel widths ($512{\to}256{\to}128{\to}\dots$) and batch normalization. \textbf{ResNet:} A high-capacity generator comprising a linear projection followed by four residual upsampling blocks. Unlike the baseline, this backbone maintains a \emph{constant channel width} of 512 throughout all blocks. \textbf{U-Net:} A conditional architecture where the latent vector $z$ modulates a learnable constant input tensor via Feature-wise Linear Modulation (FiLM) layers injected at every resolution level. To keep the evaluation pipeline of the work fair, for diffusion we employ a standard U-Net, scaling the channel width so that the total trainable parameter count remains within ${\pm}10\%$ of the competing generator.

\noindent\textbf{Budget parity.}
All methods are trained under the same compute budget per dataset: identical number of epochs, batch size, optimizer (AdamW), learning-rate schedule (cosine decay), gradient clipping, and early stopping criteria.

\noindent\textbf{Latent dimensionality sweep.}
We vary the latent dimension $q \in \{2,\ldots,128\}$ to assess sensitivity to representation size and to verify that trends are not driven by a particular latent choice. Although we conducted experiments with a latent dimension of 128, our experiments suggest that such high dimensionality is unnecessary to capture the intrinsic geometry of the datasets studied.

\noindent\textbf{Resolution ablation.}
On OASIS, we evaluate two spatial resolutions, $64\times 64$ and $128\times 128$, to probe robustness to increased output dimensionality.

\noindent\textbf{Loss sensitivity for assignment.}
We ablate the cost used to form the assignment matrix, comparing a pixelwise $\ell_2$ loss against our default perceptual cost
$\ell = \tfrac{1}{2}(1-\mathrm{SSIM}) + \tfrac{1}{2}\|\cdot\|_1$.
Unless stated otherwise, the main results use the SSIM+$\ell_1$ cost (additional details in the Appendix).

\noindent\textbf{Training of NeuroSQL.}
For \SQL we used the Sobol or Uniform lattice in $[0,1]^d$ to construct the quantiles.
We performed the \emph{Hungarian} or \emph{Greedy} algorithm at every $K$ epochs (with $K\!\in\!\{2,3,5\}$ treated as a hyperparameter), and selected the momentum parameter as $\rho{=}0.7$. 

\noindent\textbf{Additional details. }
To ensure computational parity, we standardize the optimization protocol across all methods using AdamW, cosine decay, gradient clipping, and early stopping. Full architectural details and hyperparameters are provided in Appendix~\ref{app:protocol}

\vspace{-2mm}
\section{Evaluation metrics}
\label{subsec:metrics}
\noindent
\textbf{FID (proxy; $\downarrow$)}:
Because our image domains differ from ImageNet/COCO, we report a \emph{proxy-FID} as a coarse indicator only. Specifically, we extract features using a \emph{fixed} lightweight convnet (random initialization with a fixed seed; 128-d features) and compute the Fr\'echet distance between the resulting feature distributions of real and generated samples. We match the sample counts ($n_{\text{real}}=n_{\text{gen}}=2048$ by default) and fix the metric sampling seed for reproducibility.
\textbf{LPIPS (↓)}: VGG backbone via LPIPS (fallback: cosine similarity on VGG features); mean over 50 paired real–fake images. \;
\textbf{SSIM (↑)}: mean over the same 50 pairs. \;
Unless noted, we evaluate \SQL in \emph{sampled-$z$} mode ($z\!\sim\!\mathcal{N}(0,I)$); \emph{paired-$z$} (reconstruction) results appear in the supplement.

\section{Mini-batch Training NeuroSQL}

While Algorithm \ref{alg:neurosql} describes the exact full-batch procedure, \SQL scales to large datasets ($n \gg 10^4$) via a stochastic approximation outlined in Algorithm \ref{alg:neurosql_stochastic}. In this regime, we maintain a persistent "memory bank" of latent codes $\widehat{\bm Z}$. In each iteration, we sample a mini-batch of data $\bm X_{\mathcal{B}}$ and a corresponding random subset of the lattice $\bm Q_{\mathcal{K}}$. The assignment problem is solved locally on the $m \times m$ cost matrix. This reduces the computational complexity of the assignment step from $O(n^3)$ to $O((n/m) \cdot m^3) = O(n \cdot m^2)$ per epoch using the Hungarian method, or $O(n \cdot m^2)$ using the Greedy alternative. The momentum parameter $\rho$ is critical here, acting as a temporal smoother that stabilizes the stochastic assignment trajectory towards the optimal transport map.

This computational benefit is reflected in Fig.~\ref{fig:scalability}, which reports mean epoch time as a function of sample size (N): \SQL scales comparably to a VAE baseline and substantially more favorably than a GAN as (N) grows. Moreover, increasing the local assignment size (m) increases runtime in line with the $O(n \cdot m^2)$ per-epoch dependence predicted by the mini-batch scheme.

\begin{figure}[h]
\centering
\includegraphics[width=0.45\linewidth]{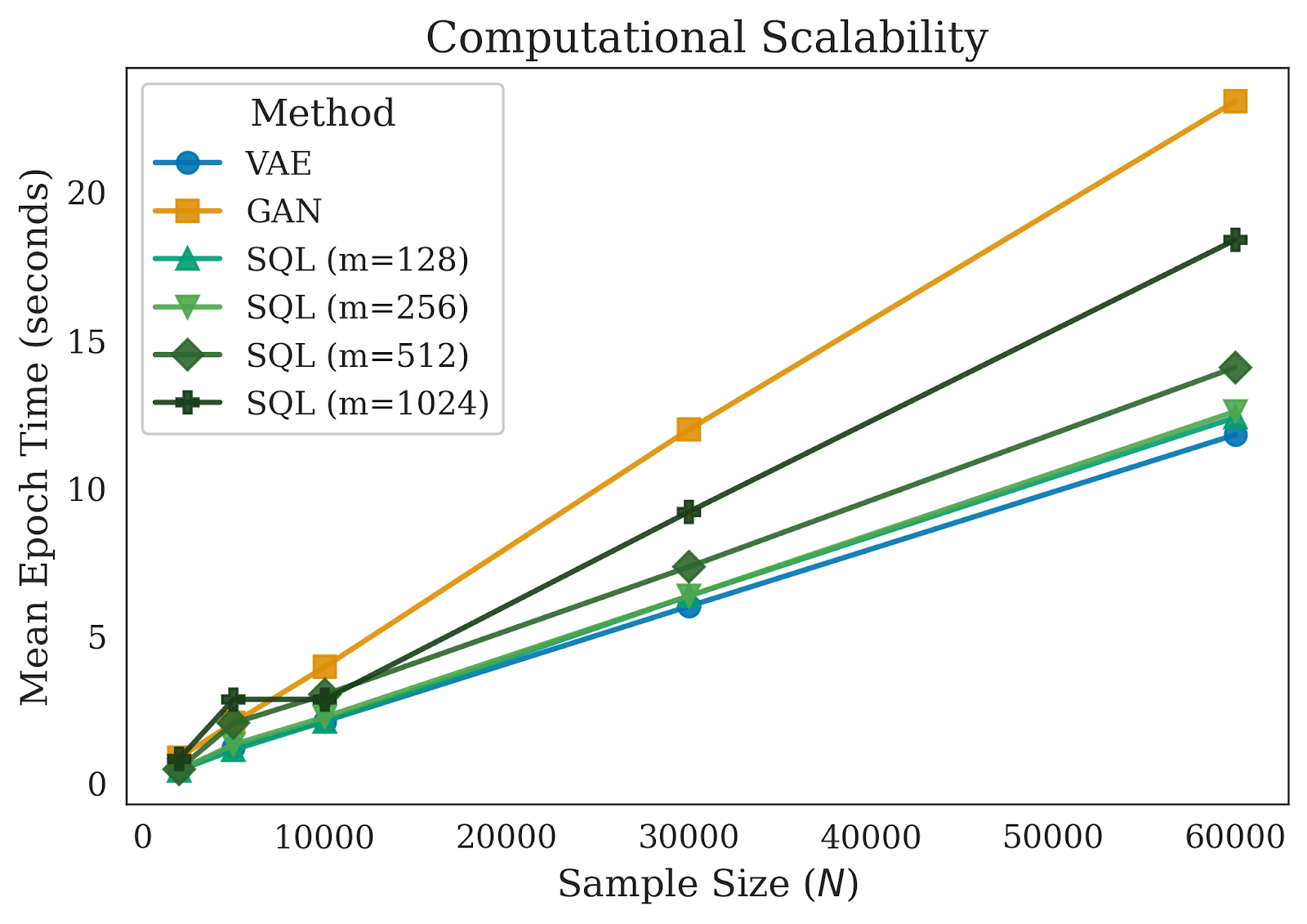}
\caption{Computational scalability: mean epoch time versus sample size $N$ for VAE, GAN, and \SQL with different local assignment sizes $m$.}
\label{fig:scalability}
\end{figure}

\begin{algorithm}[tb]
\caption{Quantile Assignment in Mini-batch Setting (Large Scale)}
\label{alg:neurosql_stochastic}
\begin{algorithmic}[1]
\State \textbf{Input:} data $\{\bm X_i\}_{i=1}^n$, prior $P_Z$, lattice $\bm Q^n$,
batch size $m$, epochs $E$, momentum $\rho\in[0,1)$, step size $\eta$
\State \textbf{Initialize:} global latent codes $\widehat{\bm Z}=\{\widehat{\bm Z}_1,\dots,\widehat{\bm Z}_n\}$
by random assignment from $\bm Q^n$
\For{$e=1,\dots,E$}
  \State Shuffle indices $\{1,\dots,n\}$
  \For{each mini-batch $\mathcal{B}\subset\{1,\dots,n\}$ with $|\mathcal{B}|=m$}
    \State \textbf{1) Generator update:}
    \State $\bm\theta \leftarrow \bm\theta
      - \eta \nabla_{\bm\theta}\left[
      \frac{1}{m}\sum_{i\in\mathcal{B}}
      \ell\!\big(\bm X_i,\bm G_{\bm\theta}(\widehat{\bm Z}_i)\big)\right]$
    \State \textbf{2) Stochastic quantile assignment:}
    \State Sample lattice subset indices $\mathcal{K}\subset\{1,\dots,n\}$ with $|\mathcal{K}|=m$
    \State Form batch cost matrix $\bm C\in\mathbb{R}^{m\times m}$ with entries
    \State \hspace{1em}$C_{j,k}\leftarrow
      \ell\!\big(\bm X_{\mathcal{B}[j]},\bm G_{\bm\theta}(\bm Q_{\mathcal{K}[k]})\big)$,
      for $j,k\in\{1,\dots,m\}$
    \State Solve batch assignment $\pi_{\text{batch}}$ on $\bm C$ (Hungarian or greedy)
    \State \textbf{3) Momentum update (sparse):}
    \For{$j=1,\dots,m$}
      \State $\widehat{\bm Z}_{\mathcal{B}[j]} \leftarrow
        \rho\,\bm Q_{\mathcal{K}[\pi_{\text{batch}}(j)]}
        + (1-\rho)\,\widehat{\bm Z}_{\mathcal{B}[j]}$
    \EndFor
  \EndFor
\EndFor
\State \textbf{Output:} generator $\bm G_{\bm\theta}$, aligned latents $\widehat{\bm Z}$
\end{algorithmic}
\end{algorithm}

\section{Greedy Assignment Algorithm}\label{appendix:greedy}

To address the scalability limitations of the Hungarian algorithm (which scales as $\mathcal{O}(n^3)$) for larger batch sizes or datasets, we implement a Greedy Assignment strategy. While the Hungarian algorithm guarantees the global minimum cost for the linear assignment problem, the Greedy approach provides an approximation that is computationally efficient ($\mathcal{O}(n^2)$) and sufficient for maintaining training stability in the paradigm we introduce with our model.

\noindent\textbf{Algorithm Description.}
The greedy strategy iterates over each row of the cost matrix. For each row $i$, it selects the column $j$ that minimizes the cost $C_{i,j}$, provided that column $j$ has not already been assigned to a previous row. Once a column is selected, it is removed from the pool of available columns.

\begin{algorithm}[tb]
\caption{Greedy Assignment ($\mathcal{O}(n^2)$)}
\label{alg:greedy}
\begin{algorithmic}[1]
\State \textbf{Input:} cost matrix $\bm C \in \mathbb{R}^{n \times n}$
\State \textbf{Initialize:} unassigned columns $\mathcal{S} \leftarrow \emptyset$, permutation $\pi \in \{1,\dots,n\}^{n}$
\For{$i=1,\dots,n$}
  \State $c_{\min} \leftarrow +\infty$, $j^\star \leftarrow 0$
  \For{$j=1,\dots,n$}
    \If{$j \notin \mathcal{S} \;\land\; C_{i,j} < c_{\min}$}

      \State $c_{\min} \leftarrow C_{i,j}$
      \State $j^\star \leftarrow j$
    \EndIf
  \EndFor
  \State $\pi(i) \leftarrow j^\star$
  \State $\mathcal{S} \leftarrow \mathcal{S}\cup\{j^\star\}$
\EndFor
\State \textbf{Output:} assignment $\pi$
\end{algorithmic}
\end{algorithm}

\noindent\textbf{Computational Complexity.}
The outer loop runs exactly $n$ times (once for each data sample). The inner loop scans $n$ columns to find the minimum unassigned cost. Although the number of available columns decreases by 1 in each iteration, the upper bound of the search remains $n$. Consequently, the total complexity is proportional to $\sum_{i=1}^{n} n = n^2$, yielding a time complexity of $\mathcal{O}(n^2)$.

With Table \ref{tab:ablation_unet} we try to show the quantitative difference that occurs when the assignment algorithm changes.

\begin{table}[!t]
\centering
\caption{Ablation Study: U-Net Latent Dimensions}
\begin{tabular}{clccc}
\hline
\textbf{Latent Dim} & \textbf{Method} & \textbf{FID (proxy)} $\downarrow$ & \textbf{LPIPS} $\downarrow$ & \textbf{SSIM} $\uparrow$ \\
\hline
8 & Greedy & 10.764394 & 0.376178 & 0.296806 \\
 & Hungarian & 11.121147 & 0.378670 & 0.274640 \\
\hline
16 & Greedy & 11.632382 & 0.398841 & 0.241317 \\
 & Hungarian & 11.798410 & 0.390295 & 0.240504 \\
\hline
24 & Greedy & 13.222514 & 0.384961 & 0.272349 \\
 & Hungarian & 11.338643 & 0.379329 & 0.271508 \\
\hline
\end{tabular}
\label{tab:ablation_unet}
\end{table}

\newpage

\newpage
\section{Quantitative evaluation of Runtime and Memory}
Here, we would like to highlight and provide a quantitative evaluation of the memory and runtime of our proposed model. The following Table \ref{tab:mnist-ablation-memory-runtime} is based on the metrics gathered from an MNIST run.

\begin{table}[ht]
\vspace{0.1in}
\centering
\caption{Ablation: Assignment Methods and Resource Usage on MNIST (seed=11). Metrics averaged over epochs where applicable.}
\label{tab:mnist-ablation-memory-runtime}
\scriptsize
\setlength{\tabcolsep}{2.5pt}
\renewcommand{\arraystretch}{1.05}

\resizebox{\columnwidth}{!}{%
\begin{tabular}{c l l c c c c c c c c}
\toprule
\shortstack{Latent\\Dim} &
\shortstack{Assign.\\Method} &
Method &
\shortstack{Peak\\VRAM\\(MB)} &
\shortstack{RAM\\Used\\(MB)} &
\shortstack{Mean\\Epoch\\Time (s)} &
\shortstack{Mean\\Epoch\\GPU\\(MB)} &
\shortstack{Mean\\Epoch\\$\Delta$RAM\\(MB)} &
\shortstack{FID\\(proxy)$\downarrow$} &
\shortstack{LPIPS$\downarrow$} &
\shortstack{SSIM$\uparrow$} \\
\midrule
\multirow{4}{*}{2}
& greedy    & NeuroSQL & 81.5   & 2764 & 1.40 & 116 & 11.2 & 0.628 & 0.045 & 0.580 \\
& hungarian & NeuroSQL & 284.6  & 3078 & 1.37 & 293 & 0.39 & 0.576 & 0.041 & 0.616 \\
&          & VAE      & 319.8  & 3091 & 1.33 & 320 & 0.10 & 1.259 & 0.050 & 0.249 \\
&          & GAN      & 405.6  & 3115 & 1.56 & 406 & 0.17 & 2.060 & 0.064 & 0.223 \\
\midrule
\multirow{4}{*}{3}
& greedy    & NeuroSQL & 284.8  & 3206 & 1.39 & 293 & 0.14 & 0.567 & 0.040 & 0.574 \\
& hungarian & NeuroSQL & 284.9  & 3298 & 1.38 & 293 & 0.00 & 0.556 & 0.037 & 0.607 \\
&          & VAE      & 320.7  & 3284 & 1.34 & 321 & 0.10 & 1.544 & 0.053 & 0.185 \\
&          & GAN      & 404.6  & 3293 & 1.56 & 404 & 0.01 & 5.443 & 0.327 & 0.070 \\
\bottomrule
\end{tabular}%
}
\vspace{0.1in}
\end{table}

\section{Models and training protocol}
\label{app:protocol}

\noindent\textbf{Common setup.}
Images are scaled to $[0,1]$ (diffusion uses $[-1,1]$ internally). We use AdamW, cosine annealing, gradient clipping, and early stopping on validation loss. Generator backbones are matched across methods for capacity parity.

\noindent\textbf{NeuroSQL (ours).}
We construct a size-$n$ deterministic latent lattice via scrambled Sobol points mapped coordinatewise through $F^{-1}_{Z_\ell}$ (Sobol$\to$Gaussian). Every $K$ epochs we solve an exact global assignment (Hungarian) between data and lattice codes, where $K \in \{2,3,5\}$ is selected as a hyperparameter. After each assignment, we apply latent momentum
$\widehat{z}^{(t)} \leftarrow \rho \, z_{\pi^{(t)}(i)} + (1-\rho) \, \widehat{z}^{(t-1)}$ with $\rho = 0.7$.
The decoder is trained by regression on assigned codes using
$\ell = \frac{1}{2}(1-\mathrm{SSIM}) + \frac{1}{2}\|\cdot\|_1$.

\noindent\textbf{VAE.}
We reuse the same generator backbone as the decoder; the encoder is an MLP on flattened pixels (to keep capacity modest). Training uses SSIM$+$L1 reconstruction plus a $\beta$-scaled KL term with $\beta = 0.005$.

\noindent\textbf{GAN.}
The generator backbone is identical to NeuroSQL's. The discriminator is a lightweight four-layer CNN. We use the non-saturating objective with BCE logits, sharing optimizer, scheduler, and gradient clipping with NeuroSQL.

\noindent\textbf{Diffusion (DDPM).}
A compact U-Net (base width $32$) is trained with a linear $\beta$ schedule for $T = 1000$ steps; default sampling uses $100$ steps to match compute. Inputs are normalized to $[-1,1]$ following common practice.

\noindent\textbf{Reproducibility knobs.}
We fix random seeds, match the number of training epochs and batch sizes across methods, and report all per-method hyperparameters (including learning rates, $K \in \{2,3,5\}$, and augmentations).

\section{Preprocessing and splits} 
\label{app:preproc}

For each dataset, we standardize a lightweight pipeline to minimize confounds while allowing small variations across runs.

\begin{itemize}
\item \textbf{Resize/crop.} \texttt{MNIST}: native $28{\times}28$. \texttt{CelebA}: center-crop then resize, typically to $32{\times}32$ (primary), with occasional $64$–$128$ experiments. \texttt{OASIS}: center-crop then resize, primarily $128{\times}128$ with $64{\times}64$ ablations.
\item \textbf{Scaling.} Inputs mapped to $[0,1]$ (no per-image standardization during training).
\item \textbf{Channel handling.} \texttt{MNIST}/\texttt{OASIS} trained as single-channel; for backends expecting 3 channels (e.g., LPIPS/VGG), we replicate channels \emph{at metric time only}.
\item \textbf{Splits.} \texttt{MNIST}: standard train/test. \texttt{CelebA}: official train/val/test. \texttt{OASIS}: subject-wise 80/10/10 to prevent slice leakage. Seeds and split indices are provided in the supplementary.
\end{itemize}

\noindent\textbf{Optimization and budgets (typicals/ranges).}
AdamW (or Adam), cosine warm restarts; weight decay $\sim 10^{-4}$; gradient clip $\approx 1.0$; early stopping on validation loss with patience $\sim 25$–$30$ epochs. Learning rates are usually in $[1\times 10^{-4},\,3\times 10^{-4}]$ for convolutional decoders; diffusion runs use comparable schedules at matched compute. Batch sizes depend on resolution: \texttt{MNIST} $32$–$64$, \texttt{CelebA} $64$–$128$, \texttt{OASIS} $32$–$64$. Epoch caps are typically $120$–$250$ across datasets, with early stopping often terminating earlier. We sweep latent dimension $d$ over $\{2,4,8,16,32,64,128\}$ and report results (Sec.~\ref{tab:results}).

\section{Proof of Proposition \ref{prop.}}
\label{app:proof}

\begin{proof}
Following \cite{hallin2021distribution}, we define the center-outward empirical distribution function as the solution of the optimal transportation problem:
$$ F^n_\pm = \argmin_{T\in \mathcal{T}} \sum_{i=1}^n \Vert \bm Z_i - T(\bm Z_i) \Vert^2, $$ 
where the minimum is taken over $\mathcal{T}$, the set of all bijective mappings between $\bm Z_1, ..., \bm Z_n$ and the grid $\bm U_1, \dots, \bm U_n$.
In fact, this is equivalent to solve a linear assignment problem:
$$ \pi^* = \argmin_{\pi \in S_n} \Vert \bm Z - \bm U^n_\pi \Vert^2,$$ 
and set $F^n_\pm(\bm Z^n) := \bm U^n_{\pi^*} $.
Then we apply Theorem 2.4 
in \cite{hallin2021distribution} to obtain:
$$\max_{1\leq i\leq n} \Vert F^n_\pm(\bm Z_i) - F_\pm(\bm Z_i) \Vert^2 \rightarrow 0 \text{ a.s. as } n \rightarrow \infty.$$ 
In our case, as $F_\pm$ is the uniform distribution over $\mathcal{U}_d$,
it is relatively straightforward to see that: 
$$ \min_{\pi \in S_n} \Vert \bm Q^n_\pi -\bm Z \Vert^2 \rightarrow 0, \text{ a.s. as } n \rightarrow \infty.$$
\end{proof}

\section{Practical training details}
\label{subsec:training}

\noindent\textbf{Loss choices.} For images, we use a perceptual, scale-stable loss:
\[
\ell(\hat{x},x) = \frac{1}{2}\big(1-\mathrm{SSIM}(\hat{x},x)\big) + \frac{1}{2}\|\hat{x}-x\|_1,
\]
which is the exact loss used in our codebase.

We instantiate $\mathrm{gen}$ with lightweight decoders so that comparisons against VAEs/GANs/Diffusion control for capacity and compute:
\begin{itemize}
\item \textbf{ConvNet.} Transposed-convolution stack mapping $z \in \mathbb{R}^q$ to $X \in \mathbb{R}^{3 \times H \times H}$ (stride-2 upsampling).
\item \textbf{ResNet} Four residual upsampling blocks (512$\to$256$\to$128$\to$64), followed by a $3 \times 3$ head with sigmoid output in $[0,1]$. We optionally initialize residual weights from ResNet-18 where shapes match.
\item \textbf{U-Net decoder.} A small transformer decoder on patchified embeddings of $z$ followed by an MLP head back to pixels.
\end{itemize}
Our experiments keep these decoders small and matched across methods to stress that gains come from the \emph{quantile--assignment loop}, not decoder sophistication.

\begin{itemize}
\item \textbf{Loss and normalization.} Images are scaled to $[0,1]$. We use $\ell = \frac{1}{2}(1-\mathrm{SSIM}) + \frac{1}{2}\ell_1$ in both decoder and cost matrix. 
\item \textbf{Optimization.} AdamW with cosine annealing and gradient clipping; early stopping on validation $\ell$.
\item \textbf{Latent momentum.} After each assignment, a momentum update $\widehat{z}^{(t)} \leftarrow \rho \, z_{\pi^{(t)}(i)} + (1-\rho) \, \widehat{z}^{(t-1)}$ stabilizes training (we use $\rho = 0.7$).
\item \textbf{Resource parity.} For fair comparisons to VAEs, GANs, and Diffusion, we fix the \emph{same} generator backbone and training budget; only the learning paradigm changes.
\end{itemize}

\section{Additional Results on OASIS, CelebA, AFHQ and MNIST}\label{tab:results}

In this section, we present a comprehensive evaluation of NeuroSQL on the OASIS brain imaging dataset, the CelebA dataset, and the AFHQ dataset, using the different distinct generator backbones: ConvNet, ResNet, and U-Net. We benchmark performance against VAE and GAN baselines using the same generators across a wide range of latent dimensions to assess model stability.

\begin{table}[h]
\centering
\caption{Results on OASIS, a brain imaging dataset, using ConvNet across latent dimensions. Lower is better for FID (proxy) and LPIPS; higher is better for SSIM.}
\label{tab:oasis-convnet}
\small
\vspace{2mm}
\begin{tabular}{@{}r l r r r@{}}
\toprule
\textbf{Latent dimension} & \textbf{Method} & \textbf{FID (Proxy) $\downarrow$} & \textbf{LPIPS $\downarrow$} & \textbf{SSIM $\uparrow$} \\
\midrule
\multirow{3}{*}{2}   & NeuroSQL & 7.914  & 0.390 & 0.259 \\
                     & VAE & 8.198  & 0.410 & 0.246 \\
                     & GAN & 9.936  & 0.450 & 0.218 \\
\midrule
\multirow{3}{*}{4}   & NeuroSQL & 9.241  & 0.403 & 0.257 \\
                     & VAE & 8.720  & 0.463 & 0.212 \\
                     & GAN & 19.420 & 0.509 & 0.206 \\
\midrule
\multirow{3}{*}{8}   & NeuroSQL & 8.286  & 0.411 & 0.256 \\
                     & VAE & 8.029  & 0.485 & 0.215 \\
                     & GAN & 12.766 & 0.558 & 0.193 \\
\midrule
\multirow{3}{*}{16}  & NeuroSQL & 8.602  & 0.455 & 0.267 \\
                     & VAE & 12.768 & 0.539 & 0.171 \\
                     & GAN & 12.485 & 0.584 & 0.200 \\
\midrule
\multirow{3}{*}{32}  & NeuroSQL & 8.856  & 0.401 & 0.257 \\
                     & VAE & 12.490 & 0.525 & 0.178 \\
                     & GAN & 14.852 & 0.593 & 0.174 \\
\midrule
\multirow{3}{*}{64}  & NeuroSQL & 7.385  & 0.388 & 0.265 \\
                     & VAE & 16.003 & 0.567 & 0.151 \\
                     & GAN & 14.453 & 0.602 & 0.125 \\
\midrule
\multirow{3}{*}{128} & NeuroSQL & 18.743 & 0.453 & 0.248 \\
                     & VAE & 16.329 & 0.571 & 0.158 \\
                     & GAN & 29.792 & 0.620 & 0.084 \\
\bottomrule
\end{tabular}
\end{table}

\begin{table}[h]
\centering
\caption{Results on OASIS, a brain imaging dataset, using ResNet across latent dimensions. Lower is better for FID (proxy) and LPIPS; higher is better for SSIM.}
\label{tab:oasis-resnet}
\small
\vspace{2mm}
\begin{tabular}{@{}r l r r r@{}}
\toprule
\textbf{Latent dimension} & \textbf{Method} & \textbf{FID (proxy) $\downarrow$} & \textbf{LPIPS $\downarrow$} & \textbf{SSIM} \\
\midrule
\multirow{3}{*}{16}  & NeuroSQL & 25.410  & 0.249 & 0.301 \\
                     & VAE & 51.139  & 0.309 & 0.171 \\
                     & GAN & 168.993 & 0.727 & 0.008 \\
\midrule
\multirow{3}{*}{32}  & NeuroSQL & 31.957  & 0.257 & 0.242 \\
                     & VAE & 66.045  & 0.362 & 0.128 \\
                     & GAN & 158.103 & 0.687 & 0.008 \\
\midrule
\multirow{3}{*}{64}  & NeuroSQL & 30.892  & 0.220 & 0.221 \\
                     & VAE & 47.146  & 0.358 & 0.135 \\
                     & GAN & 156.088 & 0.741 & 0.135 \\
\midrule
\multirow{3}{*}{128} & NeuroSQL & 34.346  & 0.224 & 0.196 \\
                     & VAE & 52.317  & 0.397 & 0.135 \\
                     & GAN & 156.288 & 0.723 & 0.124 \\
\bottomrule
\end{tabular}
\end{table}

\begin{table}[h]
\centering
\caption{Results on OASIS, a brain imaging dataset, using a U-Net across latent dimensions. Lower is better for FID (proxy) and LPIPS; higher is better for SSIM.}
\label{tab:oasis-U-Net-full}
\small
\vspace{2mm}
\begin{tabular}{@{}r l r r r@{}}
\toprule
\textbf{Latent dimension} & \textbf{Method} & \textbf{FID (proxy) $\downarrow$} & \textbf{LPIPS $\downarrow$} & \textbf{SSIM $\uparrow$} \\
\midrule
\multirow{3}{*}{4}   & NeuroSQL & 8.519902  & 0.386861 & 0.254752 \\
                     & VAE      & 7.858056  & 0.377805 & 0.265082 \\
                     & GAN      & 26.234322 & 0.597633 & 0.206676 \\
\midrule
\multirow{3}{*}{8}   & NeuroSQL & 8.579974  & 0.401890 & 0.238080 \\
                     & VAE      & 5.976874  & 0.384821 & 0.264277 \\
                     & GAN      & 35.444614 & 0.609803 & 0.172225 \\
\midrule
\multirow{3}{*}{16}  & NeuroSQL & 10.037155 & 0.391049 & 0.260999 \\
                     & VAE      & 7.178138  & 0.402099 & 0.231028 \\
                     & GAN      & 20.907921 & 0.616312 & 0.225444 \\
\midrule
\multirow{3}{*}{32}  & NeuroSQL & 8.405630  & 0.388190 & 0.253740 \\
                     & VAE      & 4.424680  & 0.402910 & 0.238710 \\
                     & GAN      & 28.069950 & 0.613550 & 0.200620 \\
\midrule
\multirow{3}{*}{64}  & NeuroSQL & 8.259449  & 0.385646 & 0.262159 \\
                     & VAE      & 6.098939  & 0.395427 & 0.272117 \\
                     & GAN      & 30.385052 & 0.644507 & 0.153054 \\
\midrule
\multirow{3}{*}{128} & NeuroSQL & 7.553965  & 0.371082 & 0.282864 \\
                     & VAE      & 9.101107  & 0.415775 & 0.257312 \\
                     & GAN      & 30.505713 & 0.553681 & 0.202557 \\
\bottomrule
\end{tabular}
\end{table}

\begin{table}[h]
\begin{center}
\caption{Results on OASIS, a brain imaging dataset, using a U-Net — with results averaged across latent dimensions $\{4,8,16,32,64,128\}$.}
\label{tab:oasis-U-Net-method-avg}
\small
\vspace{2mm}
\begin{tabular}{@{}l r r r@{}}
\toprule
\textbf{Method} & \textbf{FID (proxy) $\downarrow$} & \textbf{LPIPS $\downarrow$} & \textbf{SSIM $\uparrow$} \\
\midrule
NeuroSQL & 8.559346  & \textbf{0.387453} & \textbf{0.258766} \\
VAE & \textbf{6.772966} & 0.396473 & 0.254754 \\
GAN & 28.591262 & 0.605914 & 0.193429 \\
\bottomrule
\end{tabular}
\end{center}
\break
\footnotesize On average, across latent dimensions, NeuroSQL attains the best LPIPS and SSIM, while VAE has the lowest FID (proxy).
\end{table}


\begin{table}[h]
\centering
\caption{Results on CelebA, a face attributes dataset, using ConvNet across latent dimensions. Lower is better for FID (proxy) and LPIPS; higher is better for SSIM.}
\label{tab:celeba-convnet-ablation}
\small
\vspace{2mm}
\begin{tabular}{@{}r l r r r@{}}
\toprule
\textbf{Latent dimension} & \textbf{Method} & \textbf{FID (proxy) $\downarrow$} & \textbf{LPIPS $\downarrow$} & \textbf{SSIM $\uparrow$} \\
\midrule
\multirow{3}{*}{2}   & NeuroSQL & 9.64453  & 0.22094 & 0.31376 \\
                     & VAE & 11.49087 & 0.24068 & 0.27396 \\
                     & GAN & 26.56548 & 0.47277 & 0.10216 \\
\midrule
\multirow{3}{*}{4}   & NeuroSQL & 6.99789  & 0.22845 & 0.31645 \\
                     & VAE & 8.57467  & 0.23091 & 0.26471 \\
                     & GAN & 29.68144 & 0.49874 & 0.06955 \\
\midrule
\multirow{3}{*}{8}   & NeuroSQL & 6.686607 & 0.244427 & 0.297947 \\
                     & VAE & 32.183292& 0.281024 & 0.226046 \\
                     & GAN & 28.772638& 0.461147 & 0.058802 \\
\midrule
\multirow{3}{*}{16}  & NeuroSQL & 17.252127& 0.304758 & 0.296877 \\
                     & VAE & 11.955338& 0.282781 & 0.160219 \\
                     & GAN & 20.922581& 0.419844 & 0.117695 \\
\midrule
\multirow{3}{*}{32}  & NeuroSQL & 4.04047  & 0.19269  & 0.25707 \\
                     & VAE & 6.57785  & 0.21120  & 0.13975 \\
                     & GAN & 13.92531 & 0.31649  & 0.12201 \\
\midrule
\multirow{3}{*}{64}  & NeuroSQL & 3.94787  & 0.20649  & 0.25782 \\
                     & VAE & 3.93301  & 0.21478  & 0.15485 \\
                     & GAN & 16.27481 & 0.35050  & 0.10456 \\
\midrule
\multirow{3}{*}{128} & NeuroSQL & 4.91280  & 0.20557  & 0.26119 \\
                     & VAE & 5.57425  & 0.20361  & 0.16837 \\
                     & GAN & 17.16048 & 0.34640  & 0.15798 \\
\bottomrule
\end{tabular}
\end{table}

\begin{table}[h]
\centering
\caption{Results on CelebA, a face attributes dataset, using ResNet across latent dimensions. Lower is better for FID (proxy) and LPIPS; higher is better for SSIM.}
\label{tab:celeba-resnet}
\small
\vspace{2mm}
\begin{tabular}{@{}r l r r r@{}}
\toprule
\textbf{Latent dimension} & \textbf{Method} & \textbf{FID (proxy) $\downarrow$} & \textbf{LPIPS $\downarrow$} & \textbf{SSIM $\uparrow$} \\
\midrule
\multirow{3}{*}{2}   & NeuroSQL & 4.40599  & 0.18367 & 0.27506 \\
                     & VAE & 8.01552  & 0.19720 & 0.24972 \\
                     & GAN & 19.01699 & 0.20240 & 0.13535 \\
\midrule
\multirow{3}{*}{4}   & NeuroSQL & 4.47511  & 0.16968 & 0.26754 \\
                     & VAE & 6.03891  & 0.20652 & 0.20760 \\
                     & GAN & 14.21587 & 0.26898 & 0.14515 \\
\midrule
\multirow{3}{*}{8}   & NeuroSQL & 4.60623  & 0.18201 & 0.25536 \\
                     & VAE & 4.86620  & 0.25218 & 0.17823 \\
                     & GAN & 13.94172 & 0.18593 & 0.17628 \\
\midrule
\multirow{3}{*}{32}  & NeuroSQL & 3.99240  & 0.18115 & 0.21823 \\
                     & VAE & 6.00124  & 0.25017 & 0.12915 \\
                     & GAN & 11.08812 & 0.23644 & 0.16986 \\
\bottomrule
\end{tabular}
\end{table}


\begin{table}[h]
\centering
\caption{Results on CelebA, a face attributes dataset, using U-Net (unconditional) across latent dimensions. Lower is better for FID (proxy) and LPIPS; higher is better for SSIM.}
\label{tab:celeba-U-Net-full}
\small
\vspace{2mm}
\begin{tabular}{@{}r l r r r@{}}
\toprule
\textbf{Latent dimension} & \textbf{Method} & \textbf{FID (proxy) $\downarrow$} & \textbf{LPIPS $\downarrow$} & \textbf{SSIM $\uparrow$} \\
\midrule
\multirow{3}{*}{2}  & NeuroSQL & 4.96169  & 0.18253 & 0.27537 \\
                    & VAE      & 7.33227  & 0.19269 & 0.25696 \\
                    & GAN      & 14.09393 & 0.22041 & 0.12822 \\
\midrule
\multirow{3}{*}{4}  & NeuroSQL & 3.54758  & 0.18932 & 0.27993 \\
                    & VAE      & 4.99991  & 0.19061 & 0.22841 \\
                    & GAN      & 7.66398  & 0.19251 & 0.14899 \\
\midrule
\multirow{3}{*}{8}  & NeuroSQL & 3.96161  & 0.19244 & 0.24416 \\
                    & VAE      & 5.48428  & 0.20413 & 0.20573 \\
                    & GAN      & 20.46054 & 0.16577 & 0.18463 \\
\midrule
\multirow{3}{*}{16} & NeuroSQL & 2.72089  & 0.17970 & 0.24505 \\
                    & VAE      & 10.00917 & 0.19001 & 0.18914 \\
                    & GAN      & 14.48838 & 0.18500 & 0.15935 \\
\midrule
\multirow{3}{*}{32} & NeuroSQL & 14.65015 & 0.21850 & 0.27135 \\
                    & VAE      & 31.10957 & 0.24081 & 0.21405 \\
                    & GAN      & 30.37928 & 0.31335 & 0.13607 \\
\bottomrule
\end{tabular}
\end{table}


\begin{table}[h]
\centering
\caption{Results on AFHQ, an animal faces dataset, using ConvNet across latent dimensions. Lower is better for FID (proxy) and LPIPS; higher is better for SSIM.}
\label{tab:animals-conv256-full}
\small
\vspace{2mm}
\begin{tabular}{@{}r l r r r@{}}
\toprule
\textbf{Latent dimension} & \textbf{Method} & \textbf{FID (proxy) $\downarrow$} & \textbf{LPIPS $\downarrow$} & \textbf{SSIM $\uparrow$} \\
\midrule
\multirow{3}{*}{2}   & NeuroSQL & 22.023664 & 0.563209 & 0.339032 \\
                     & VAE      & 44.276279 & 0.531442 & 0.284219 \\
                     & GAN      & 74.178604 & 0.693773 & 0.080698 \\
\midrule
\multirow{3}{*}{4}   & NeuroSQL & 35.913769 & 0.538813 & 0.357857 \\
                     & VAE      & 35.430927 & 0.527985 & 0.285893 \\
                     & GAN      & 48.706165 & 0.669665 & 0.129869 \\
\midrule
\multirow{3}{*}{8}   & NeuroSQL & 37.569298 & 0.505324 & 0.353983 \\
                     & VAE      & 83.373650 & 0.516064 & 0.211640 \\
                     & GAN      & 30.621849 & 0.797056 & 0.191322 \\
\midrule
\multirow{3}{*}{16}  & NeuroSQL & 52.609642 & 0.564259 & 0.370944 \\
                     & VAE      & 88.436768 & 0.527512 & 0.161437 \\
                     & GAN      & 98.826027 & 0.620977 & 0.066925 \\
\midrule
\multirow{3}{*}{32}  & NeuroSQL & 31.695450 & 0.512806 & 0.339188 \\
                     & VAE      & 95.740288 & 0.499635 & 0.172815 \\
                     & GAN      & 46.238804 & 0.751076 & 0.072116 \\
\midrule
\multirow{3}{*}{64}  & NeuroSQL & 28.105133  & 0.532103 & 0.347139 \\
                     & VAE      & 127.357986 & 0.525766 & 0.140224 \\
                     & GAN      & 50.671597  & 0.753972 & 0.067132 \\
\midrule
\multirow{3}{*}{128} & NeuroSQL & 17.399908 & 0.591030 & 0.368674 \\
                     & VAE      & 23.283272 & 0.789889 & 0.374535 \\
                     & GAN      & 59.287437 & 0.754374 & 0.042504 \\
\bottomrule
\end{tabular}
\end{table}

\begin{table}[h]
\centering
\caption{Results on AFHQ, an animal faces dataset, using ConvNet 
— with results averaged across
latent dimensions.}
\label{tab:animals-conv256-avg}
\small
\vspace{2mm}
\begin{tabular}{@{}l r r r@{}}
\toprule
\textbf{Method} & \textbf{FID (proxy) $\downarrow$} & \textbf{LPIPS $\downarrow$} & \textbf{SSIM $\uparrow$} \\
\midrule
NeuroSQL & \textbf{32.19} & \textbf{0.544} & \textbf{0.354} \\
VAE & 71.13 & 0.560 & 0.233 \\
GAN & 58.36 & 0.720 & 0.093 \\
\bottomrule
\end{tabular}
\end{table}


\begin{table}[h]
\centering
\caption{Results on AFHQ, an animal faces dataset, using ResNet across latent dimensions. Lower is better for FID (proxy) and LPIPS; higher is better for SSIM.}
\label{tab:animals-resnet256-full}
\small
\vspace{2mm}
\begin{tabular}{@{}r l r r r@{}}
\toprule
\textbf{Latent dimension} & \textbf{Method} & \textbf{FID (proxy) $\downarrow$} & \textbf{LPIPS $\downarrow$} & \textbf{SSIM $\uparrow$} \\
\midrule
\multirow{3}{*}{2}   & NeuroSQL &  7.267559 & 0.459967 & 0.267403 \\
                     & VAE      & 15.259568 & 0.448991 & 0.235907 \\
                     & GAN      & 39.100639 & 0.550003 & 0.091928 \\
\midrule
\multirow{3}{*}{4}   & NeuroSQL & 11.560558 & 0.460007 & 0.246023 \\
                     & VAE      & 13.387914 & 0.432443 & 0.192707 \\
                     & GAN      & 37.873501 & 0.356168 & 0.120893 \\
\midrule
\multirow{3}{*}{8}   & NeuroSQL & 13.611592 & 0.471643 & 0.241053 \\
                     & VAE      & 11.736956 & 0.431859 & 0.204782 \\
                     & GAN      & 29.584837 & 0.486513 & 0.081718 \\
\midrule
\multirow{3}{*}{16}  & NeuroSQL & 17.987249 & 0.341935 & 0.208179 \\
                     & VAE      &  9.721145 & 0.347468 & 0.178343 \\
                     & GAN      & 17.546219 & 0.483133 & 0.107947 \\
\midrule
\multirow{3}{*}{32}  & NeuroSQL & 14.799847 & 0.438891 & 0.255200 \\
                     & VAE      &  8.249439 & 0.341844 & 0.188594 \\
                     & GAN      & 22.425539 & 0.363643 & 0.113281 \\
\midrule
\multirow{3}{*}{64}  & NeuroSQL & 10.561233 & 0.558646 & 0.252005 \\
                     & VAE      & 14.458963 & 0.404593 & 0.202796 \\
                     & GAN      & 40.478588 & 0.423956 & 0.107915 \\
\midrule
\multirow{3}{*}{128} & NeuroSQL & 11.019302 & 0.489376 & 0.261104 \\
                     & VAE      & 13.060718 & 0.353215 & 0.208177 \\
                     & GAN      & 18.462259 & 0.381437 & 0.104066 \\
\bottomrule
\end{tabular}
\end{table}

\begin{table}[h]
\centering
\caption{Results on AFHQ, an animal faces dataset, using ResNet 
— with results averaged across
latent dimensions.}
\label{tab:animals-resnet256-avg}
\small
\vspace{2mm}
\begin{tabular}{@{}l r r r@{}}
\toprule
\textbf{Method} & \textbf{FID (proxy) $\downarrow$} & \textbf{LPIPS $\downarrow$} & \textbf{SSIM $\uparrow$} \\
\midrule
NeuroSQL & 12.40 & 0.460 & \textbf{0.247} \\
VAE & \textbf{12.27} & \textbf{0.394} & 0.202 \\
GAN & 29.35 & 0.435 & 0.104 \\
\bottomrule
\end{tabular}
\end{table}


\begin{table}[h]
\centering
\caption{Results on AFHQ, an animal faces dataset, using U-Net across latent dimensions. Lower is better for FID (proxy) and LPIPS; higher is better for SSIM.}
\label{tab:animals-U-Net-full}
\small
\vspace{2mm}
\begin{tabular}{@{}r l r r r@{}}
\toprule
\textbf{Latent dimension} & \textbf{Method} & \textbf{FID (proxy) $\downarrow$} & \textbf{LPIPS $\downarrow$} & \textbf{SSIM $\uparrow$} \\
\midrule
\multirow{3}{*}{2}  & NeuroSQL & 17.626825 & 0.595854 & 0.272737 \\
                    & VAE      & 10.595321 & 0.620105 & 0.279837 \\
                    & GAN      & 23.709433 & 0.405830 & 0.116950 \\
\midrule
\multirow{3}{*}{3}  & NeuroSQL & 10.193194 & 0.617973 & 0.278050 \\
                    & VAE      & 29.156761 & 0.578171 & 0.173115 \\
                    & GAN      & 17.098032 & 0.607903 & 0.074634 \\
\midrule
\multirow{3}{*}{4}  & NeuroSQL & 10.773172 & 0.561935 & 0.258762 \\
                    & VAE      & 14.463408 & 0.576194 & 0.117610 \\
                    & GAN      & 13.078835 & 0.626831 & 0.057038 \\
\midrule
\multirow{3}{*}{8}  & NeuroSQL & 11.994763 & 0.638746 & 0.275091 \\
                    & VAE      & 42.187984 & 0.571152 & 0.062857 \\
                    & GAN      & 75.077110 & 0.618699 & 0.039482 \\
\midrule
\multirow{3}{*}{16} & NeuroSQL & 10.660514 & 0.659894 & 0.266318 \\
                    & VAE      & 51.288338 & 0.550339 & 0.135791 \\
                    & GAN      & 97.593185 & 0.703616 & 0.010010 \\
\midrule
\multirow{3}{*}{32} & NeuroSQL & 16.301731 & 0.595429 & 0.252468 \\
                    & VAE      & 28.342377 & 0.566271 & 0.076232 \\
                    & GAN      & 25.781809 & 0.604119 & 0.025050 \\
\midrule
\multirow{3}{*}{64} & NeuroSQL & 9.855629  & 0.660927 & 0.276445 \\
                    & VAE      & 62.289070 & 0.585000 & 0.093420 \\
                    & GAN      & 99.730621 & 0.702493 & 0.010348 \\
\bottomrule
\end{tabular}
\end{table}

\begin{table}[h]
\begin{center}
\caption{Results on AFHQ, an animal faces dataset, using U-Net 
— with results averaged across
latent dimensions $\{2,3,4,8,16,32,64\}$.}
\label{tab:animals-U-Net-method-avg}
\small
\vspace{2mm}
\begin{tabular}{@{}l r r r@{}}
\toprule
\textbf{Method} & \textbf{FID (proxy) $\downarrow$} & \textbf{LPIPS $\downarrow$} & \textbf{SSIM $\uparrow$} \\
\midrule
NeuroSQL & \textbf{12.486547} & 0.618680 & \textbf{0.268553} \\
VAE & 34.046180 & \textbf{0.578176} & 0.134123 \\
GAN & 50.295575 & 0.609927 & 0.047645 \\
\bottomrule
\end{tabular}
\end{center}
\break
\footnotesize On average, NeuroSQL achieves the best FID (proxy) and SSIM; VAE attains the lowest LPIPS.
\end{table}


\begin{table}[h]
\centering
\caption{Ablation results on MNIST, a database of handwritten digits (all runs). Lower is better for FID (proxy) and LPIPS; higher is better for SSIM.}
\label{tab:mnist-ablation-full}
\small
\vspace{2mm}
\begin{tabular}{@{}r r l r r r@{}}
\toprule
\textbf{Latent dimension} & \textbf{Seed} & \textbf{Method} &
\textbf{FID (proxy) $\downarrow$} & \textbf{LPIPS $\downarrow$} & \textbf{SSIM $\uparrow$} \\
\midrule
\multirow{3}{*}{2} & \multirow{3}{*}{11} & NeuroSQL & 0.696835 & 0.039503 & 0.564344 \\
                   &                      & VAE      & 1.070473 & 0.058065 & 0.200167 \\
                   &                      & GAN      & 2.157639 & 0.054278 & 0.245780 \\
\midrule
\multirow{3}{*}{3} & \multirow{3}{*}{11} & NeuroSQL & 0.527451 & 0.030257 & 0.668574 \\
                   &                      & VAE      & 1.443453 & 0.060009 & 0.157831 \\
                   &                      & GAN      & 1.849820 & 0.057282 & 0.220785 \\
\bottomrule
\end{tabular}
\end{table}

\begin{table}[t]
\begin{center}
\caption{Results on MNIST, a database of handwritten digits — averaged over latent dimensions $\{2,3\}$ (seed $=11$).}
\label{tab:mnist-method-avg}
\small
\vspace{2mm}
\begin{tabular}{@{}l r r r@{}}
\toprule
\textbf{Method} & \textbf{FID (proxy) $\downarrow$} & \textbf{LPIPS $\downarrow$} & \textbf{SSIM $\uparrow$} \\
\midrule
NeuroSQL & \textbf{0.612143} & \textbf{0.034880} & \textbf{0.616459} \\
VAE & 1.256963 & 0.059037 & 0.178999 \\
GAN & 2.003730 & 0.055780 & 0.233283 \\
\bottomrule
\end{tabular}
\end{center}
\break
\footnotesize NeuroSQL is best on all three metrics (FID (proxy), LPIPS, and SSIM).
\end{table}

\end{document}